\DeclareMathOperator*{\argmin}{argmin}
\newcommand{\alg}{\texttt{Alg}\xspace}
\newcommand{\bernoulli}[1]{\texttt{Bernoulli}(#1)}
\def\ddefloop#1{\ifx\ddefloop#1\else\ddef{#1}\expandafter\ddefloop\fi}
\def\ddef#1{\expandafter\def\csname bb#1\endcsname{\ensuremath{\mathbb{#1}}}} 
\def\ddef#1{\expandafter\def\csname b#1\endcsname{\ensuremath{\mathbf{#1}}}} 
\def\ddef#1{\expandafter\def\csname c#1\endcsname{\ensuremath{\mathcal{#1}}}} 
\def\ddef#1{\expandafter\def\csname h#1\endcsname{\ensuremath{\widehat{#1}}}} 
\def\ddef#1{\expandafter\def\csname t#1\endcsname{\ensuremath{\widetilde{#1}}}} 
\newcommand{\abs}[1]{\left\lvert #1 \right\rvert} 
\newcommand{\cbrk}[1]{\left\{ #1 \right\}} 
\newcommand{\sbrk}[1]{\left[ #1 \right]} 
\newcommand{\paren}[1]{\left( #1 \right)} 
\NewDocumentCommand{\E}{o m o}{
    \bbE%
    \IfValueT{#1}{_{#1}}%
    \sbrk{#2%
    \IfValueT{#3}{\mid #3}%
    }
}
\NewDocumentCommand{\V}{o m o}{
    \bbV%
    \IfValueT{#1}{_{#1}}%
    \sbrk{
        #2
        \IfValueT{#3}{\mid #3}
    }
}
\NewDocumentCommand{\Prob}{o m o}{
    \bbP%
    \IfValueT{#1}{_{#1}}%
    \paren{
        #2
        \IfValueT{#3}{\mid #3}
    }
}
\newcommand{\I}[1]{\bbI\sbrk{#1}}
\NewDocumentCommand{\bigO}{o}{
    \cO%
    \IfValueT{#1}{\paren{#1}}
}
\NewDocumentCommand{\bigTildeO}{o}{
    \widetilde{\cO}%
    \IfValueT{#1}{\paren{#1}}
}
\NewDocumentCommand{\bigTheta}{o}{
    \ensuremath{\Theta}%
    \IfValueT{#1}{\paren{#1}}
}
\NewDocumentCommand{\bigOmega}{o}{
    \ensuremath{\Omega}%
    \IfValueT{#1}{\paren{#1}}
}
\NewDocumentCommand{\filtration}{o}{
    \ensuremath{\cF}%
    \IfValueT{#1}{_{#1}}
}
\NewDocumentCommand{\history}{o}{
    \ensuremath{\cH}%
    \IfValueT{#1}{_{#1}}
}
\NewDocumentCommand{\CS}{o o}{\cC\cS_{#1}({#2})} 
\NewDocumentCommand{\CI}{o o}{\mathcal{CI}_{#1}({#2})}
\NewDocumentCommand{\UCB}{o o}{\mathcal{U}_{#1}({#2})}
\NewDocumentCommand{\UCS}{o o}{\mathcal{U}_{#1}({#2})}
\NewDocumentCommand{\LCB}{o o}{\mathcal{L}_{#1}({#2})}
\NewDocumentCommand{\LCS}{o o}{\mathcal{L}_{#1}({#2})}
\newcommand{\goodEvent}{\cE}
\newcommand{\mainalgname}{Optimistic Policy Tracking\xspace}
\newcommand{\mainalgnameshort}{\texttt{OPTrack}\xspace}
\newcommand{\clipOGD}{\texttt{ClipOGD}\xspace}
\newcommand{\clipSMT}{\texttt{ClipSMT}\xspace}
\newcommand{\clipSDT}{\texttt{ClipSDT}\xspace}
\newcommand{\AAIPW}{\texttt{A2IPW}\xspace}
\newcommand{\IPW}{\texttt{IPW}\xspace}
\newcommand{\NUMSIMS}{500,000\xspace}
\NewDocumentCommand{\estimate}{o}{h_{#1}}
\NewDocumentCommand{\bandit}{o}{\ensuremath{\nu\IfValueT{#1}{_{#1}}}}
\newcommand{\errorProb}{\delta}
\NewDocumentCommand{\neymanRegret}{o}{\mathfrak{R} \IfValueT{#1}{_{#1}}}
\NewDocumentCommand{\confWidth}{o o}{
    Z
    \IfValueT{#1}{_{#1}
    \IfValueT{#2}{\paren{#2}}%
    }%
}
\newcommand{\stdevGap}{\Delta_{(\sigma)}}
\newcommand{\clippingSequence}{c}
\newcommand{\timeIdx}{\ensuremath{s}}
\newcommand{\timeidx}{\ensuremath{s}}
\newcommand{\tidx}{\ensuremath{1}}
\newcommand{\cidx}{\ensuremath{0}}
\newcommand{\aidx}{\ensuremath{a}} 
\newcommand{\actionIdx}{\ensuremath{a}}
\newcommand{\acount}[2]{\ensuremath{N_{#1}(#2)}} 
\newcommand{\round}{\ensuremath{t}}
\newcommand{\numRounds}{\ensuremath{T}}
\newcommand{\exploreTime}{\bT}
\newcommand{\iw}{w}
\newcommand{\estError}{\varepsilon}
\newcommand{\AIPW}{\texttt{AIPW}\xspace}
\NewDocumentCommand{\aipwEstimate}{o}{
    \ensuremath{h}%
    \IfValueT{#1}{_{#1}}%
}
\NewDocumentCommand{\action}{o}{
    \ensuremath{A}%
    \IfValueT{#1}{_{#1}}%
}
\NewDocumentCommand{\context}{o}{
    \ensuremath{X}%
    \IfValueT{#1}{_{#1}}%
}
\NewDocumentCommand{\reward}{o o}{
    \ensuremath{R}
    \IfValueT{#1}{_{#1}
    \IfValueT{#2}{\paren{#2}}%
    }%
}
\NewDocumentCommand{\predReward}{o o}{
    \widehat{r}
    \IfValueT{#1}{_{#1}}%
    \IfValueT{#2}{\paren{#2}}%
}
\newcommand{\mean}{\mu}
\newcommand{\rewardFunction}{r}
\NewDocumentCommand{\trueReward}{o}{
    \rewardFunction^{\star}
    \IfValueT{#1}{\paren{#1}}%
}
\NewDocumentCommand{\estReward}{o o}{
    \widehat{\rewardFunction}%
    \IfValueT{#1}{_{#1}}%
    \IfValueT{#2}{\paren{#2}}%
}
\NewDocumentCommand{\estRewardError}{o o}{
    \estError%
    \IfValueT{#1}{_{#1}}%
    \IfValueT{#2}{\paren{#2}}%
}
\newcommand{\rewardDeviation}{\delta}
\NewDocumentCommand{\empmean}{o o}{
    \ensuremath{
    \mu%
    \IfValueT{#1}{_{#1}}%
    \IfValueT{#2}{\paren{#2}}%
    }
}
\NewDocumentCommand{\stdev}{o}{\ensuremath{
    \sigma
    \IfValueT{#1}{\paren{#1}}
    }
}
\NewDocumentCommand{\empstdev}{o o}{
    \widehat{\stdev}%
    \IfValueT{#1}{_{#1}}%
    \IfValueT{#2}{\paren{#2}}%
}
\NewDocumentCommand{\var}{o}{
    \stdev^{2}%
    \IfValueT{#1}{\paren{#1}}%
}
\NewDocumentCommand{\empvar}{o o}{
    \widehat{\stdev}^{2}%
    \IfValueT{#1}{_{#1}}%
    \IfValueT{#2}{\paren{#2}}%
}
\NewDocumentCommand{\policy}{o o}{
    \ensuremath{\pi}%
    \IfValueT{#1}{_{#1}}
    \IfValueT{#2}{\paren{#2}}
}%
\newcommand{\neymanPolicy}{\policy^{\star}}
\newcommand{\ATE}{\Delta}
\NewDocumentCommand{\estATE}{o o}{
    \ensuremath{
        \widehat{\Delta}
        \IfValueT{#1}{_{#1}}
        \IfValueT{#2}{^{(#2)}}
    }
}
\NewDocumentCommand{\estAIPW}{o}{
    \ensuremath{
        \widehat{\Delta}
        \IfValueT{#1}{_{#1}}
    }
}
\NewDocumentCommand{\predATE}{o}{
    \ensuremath{
        \widehat{\Delta}^{\paren{
            \IfValueT{#1}{\predReward[#1]}
        }
        }
        
    }
}
\NewDocumentCommand{\estRewardATE}{o}{
    \ensuremath{
        \widehat{\Delta}^{\texttt{\textup{(r)}}}
        \IfValueT{#1}{_{#1}}
    }
}
\theoremstyle{plain}
\newtheorem{theorem}{Theorem}[section]
\newtheorem{lemma}[theorem]{Lemma}
\theoremstyle{definition}
\theoremstyle{remark}
\icmltitlerunning{Submission and Formatting Instructions for ICML 2025}
\begin{document}

\twocolumn[
\icmltitle{Optimistic Algorithms for \\ Adaptive Estimation of the Average Treatment Effect}



\icmlsetsymbol{equal}{*}

\begin{icmlauthorlist}
\icmlauthor{Ojash Neopane}{mld}
\icmlauthor{Aaditya Ramdas}{mld,dss}
\icmlauthor{Aarti Singh}{mld}
\end{icmlauthorlist}

\icmlaffiliation{mld}{Machine Learning Department, Carnegie Mellon University}
\icmlaffiliation{dss}{Department of Statistics \& Data Science, Carnegie Mellon University}

\icmlcorrespondingauthor{Ojash Neopane}{oneopane@andrew.cmu.edu}


\vskip 0.3in
]

\icmlkeywords{Causal Inference, Adaptive Algorithms, Multi-Armed Bandits}
\begin{abstract}
Estimation and inference for the Average Treatment Effect (ATE) is a cornerstone of causal inference and often serves as the foundation for developing procedures for more complicated settings. 
Although traditionally analyzed in a batch setting, recent advances in martingale theory have paved the way for adaptive methods that can enhance the power of downstream inference. 
Despite these advances, progress in understanding and developing adaptive algorithms remains in its early stages. 
Existing work either focus on asymptotic analyses that overlook exploration-exploitation trade-offs relevant in finite-sample regimes or rely on simpler but suboptimal estimators.
In this work, we address these limitations by studying adaptive sampling procedures that take advantage of the asymptotically optimal Augmented Inverse Probability Weighting (AIPW) estimator. 
Our analysis uncovers challenges obscured by asymptotic approaches and introduces a novel algorithmic design principle reminiscent of optimism in multi-armed bandits. 
This principled approach enables our algorithm to achieve significant theoretical and empirical gains compared to prior methods. 
Our findings mark a step forward in advancing adaptive causal inference methods in theory and practice.
\end{abstract}


\section{Introduction}\label{sec:intro}

The problem of estimating the average treatment effect (ATE) is central to causal inference and has been extensively studied. 
We have a precise understanding of the difficulty of this problem in both asymptotic and nonasymptotic regimes. However, our understanding of the challenges associated with \emph{adaptive} ATE estimation remains limited.

Classically, adaptive ATE estimation has been analyzed in an asymptotic setting, where past work has focused on designing adaptive sampling procedures that ensure that the resulting ATE estimate achieves the smallest possible asymptotic variance, that is, the semiparametric efficiency bound. More recently, there has been growing interest in developing algorithms that provide nonasymptotic performance guarantees. However, these works suffer from certain drawbacks that lead to poor finite sample performance, an issue we discuss in detail in Sections~\ref{sec:related-works} and~\ref{subsec:past-alg-issues}.

In this work, we take a nonasymptotic perspective on adaptive ATE estimation, focusing on the Augmented Inverse Propensity Weighting (AIPW) estimator. Our finite-sample analysis reveals key aspects of algorithmic design that prior works have overlooked. This enables us to propose a new algorithm with substantially improved theoretical and empirical performance while also simplifying the analysis.

At the heart of our approach is the insight that initially over-sampling arms that should eventually be under-sampled according to the (unknown) optimal allocation can lead to better estimates of the ATE. Interestingly, this idea can be interpreted as an instance of the principle of \emph{optimism}, a well-established algorithmic design paradigm in the literature on regret minimization in multi-armed bandits (MAB) and reinforcement learning. We discuss this connection in more detail in Section~\ref{sec:algorithm}.

\paragraph{Contributions.}
Our main contributions are as follows:
\begin{enumerate}
    \item We develop and analyze a new algorithm, \mainalgname (\mainalgnameshort), for adaptive estimation of ATE that enjoys significant theoretical improvements over previous approaches along with a significantly simplified analysis.
    \item We perform simulations that demonstrate that our theoretical improvements translate into empirical improvements, especially in the small sample regime, which is critical for applications like randomized clinical trials.
\end{enumerate}

\paragraph{Organization.}
The rest of our paper is organized as follows. 
The remainder of this paper is structured as follows.
In Section~\ref{sec:related-works}, we review prior and related work.
Section~\ref{sec:problem-setup} introduces our problem setup, establishing the necessary framework for our contributions.
In Section~\ref{sec:algorithm}, we identify key limitations of existing approaches and present our proposed \mainalgnameshort algorithm.
Section~\ref{sec:results} contains our main theoretical results, providing a rigorous performance characterization of \mainalgnameshort.
Finally, in Section~\ref{sec:experiments}, we empirically validate our method, demonstrating its superior performance compared to existing approaches and its competitiveness with—often surpassing—even an infeasible oracle baseline.

\section{Prior and Related Works}\label{sec:related-works}

Adaptive experimental design has a long and distinguished history, tracing back to the seminal work of \citet{Neyman1934OnTT} on optimal allocation in experimental studies.
 \citet{Thompson1933ONTL} introduced a Bayesian adaptive design, thereby laying the foundation for the MAB problem. Thompson's approach of sequential updating beliefs about treatments (or arms) based on observed outcomes is now central in MAB research \citep{Lattimore2020BanditA}.
However, many problem formulations focus on maximizing cumulative rewards over repeated rounds of exploration-exploitation. In contrast, our objective of ATE estimation differs from the typical MAB focus and raises different forms of exploration trade-offs.

\subsection{Prior Work}
Our work builds on a recent line of work investigating adaptive algorithms aimed at efficiently estimating ATE.
\citet{Hahn2009AdaptiveED} sparked this research direction by proposing a two-stage design, conceptually similar to the Explore-then-Commit algorithms in MAB \citep{Garivier2016OnES} and showing that it asymptotically attains the minimum-variance semiparametric efficiency bound.
Subsequently, \citet{Kato2020EfficientAE} introduced a fully adaptive procedure using the \emph{adaptive} AIPW estimator (\AAIPW), and showed that it is asymptotically optimal (in the above sense) while also providing improved empirical performance compared to the less adaptive two-stage design.
Later, \citet{cook24semi} proposed an alternative method called Clipped Standard-Deviation Tracking (\clipSDT), which inherits the same asymptotic optimality under milder assumptions, admits modern uncertainty quantification tools \citep{WaudbySmith2022AnytimevalidOI}, and outperforms the earlier approach empirically.
In parallel work, \citet{Li2024OptimalTA} significantly generalized the two-stage design in \citet{Hahn2009AdaptiveED}, extending its applicability to a broad spectrum of problems, including Markovian and non-Markovian decision processes.

Despite these advances, all of the above approaches focus on characterizing the asymptotic behavior of their approaches, leaving open questions about finite-sample performance of their work.
In order to address these questions, \citet{dai2023clip} takes an initial step toward understanding the nonasymptotic difficulty by introducing the \clipOGD algorithm for the fixed-design setting.
They introduce and analyze the Neyman regret (in the design-based setting), which is a normalized proxy to the variance of the resulting ATE estimate.
Even more recently, \citet{neopane2024logarithmic} propose and analyze the \clipSMT algorithm for the superpopulation setting and show that it enjoys an improved $\log \numRounds$ bound on the Neyman regret.

While these two works take important first steps toward understanding the nonasymptotic difficulty of adaptive ATE estimation, they algorithms rely on the \IPW estimator which is known to be suboptimal.
Indeed, these works define the Neyman regret with respect to the minimum variance \IPW estimator, where the minimization is performed over all possible allocations.
In contrast, our definition of the Neyman regret is much stronger as the baseline we compete against is defined as the minimum attainable variance over \emph{all} pairs of estimators and allocations.
Notably, using this stronger definition of regret, the aforementioned approaches obtain linear Neyman regret, where as we are able to design an algorithm which obtains logarithmic Neyman regret.

\subsection{Related Works}
The problem of off-policy evaluation, which generalizes ATE estimation, has been extensively studied in the literature on reinforcement learning \citep{Dudik2011Doubly, Li2011Unbiased, Jiang2016DoublyRO}. 
Most of the research in this area has focused on offline estimation, leading to precise characterizations of minimax lower bounds along with matching upper bounds \citep{Li2015Toward, Wang2017OptimalAA, Duan2020Minimax, Ma2021MinimaxOE}. Beyond policy evaluation, these methods have been extended to estimate other quantities, such as the cumulative distribution function of rewards \citep{Huang2021OffCB, Huang2022OffRL}. 
However, there has been limited exploration of adaptive versions of these methods. 
Some existing work includes \citet{Hanna2017DataEfficientPE}, which focuses on off-policy learning, and \citet{Konyushova2021Active}, which integrates offline off-policy evaluation techniques with online data acquisition to enhance sample efficiency in policy selection. 
However, these works are primarily empirical.

A related area of research concerns inference procedures for adaptively collected data. This  can be categorized into asymptotic and non-asymptotic approaches. 
On the asymptotic side, one direction has focused on re-weighting estimators and establishing their asymptotic normality \citep{Hadad2021Confidence, Zhang2020Inference, Zhang2021Statistical}. 
Another direction avoids asymptotics, instead leveraging modern advances in martingale theory to derive nonasymptotic confidence intervals and sequences for adaptively collected data, including estimates of the ATE \citep{Howard2018TimeUniform, Waudby2023Estimating, WaudbySmith2022AnytimevalidOI}.

\section{Background}\label{sec:problem-setup}

\paragraph{Problem Setup}
We are interested in adaptive estimation of the average treatment effect.
During each round, $\round$, $\alg$ uses the history of past observations $\history[\round - 1] = \cbrk{\paren{\policy_\timeIdx, \action_\timeIdx , \reward_\timeIdx}}_{\timeIdx = 1}^{\round - 1}$ to select the probability of treatment allocation $\policy[\round]$.
Then, $\policy[\round]$ is used to assign the next experimental unit to either the control $(\action[\round] = \cidx)$ or the treatment $(\action[\round] = \tidx)$ by sampling $\action[\round] \sim \bernoulli{\policy[\round]}$.
Finally, after assigning the experimental unit, we observe the outcome $\reward[\round]$ which marks the end of round $\round$. 

We formalize the above interaction protocol as follows.
Let $\filtration[\round] = \sigma\paren{\history[\round]}$ denote the filtration generated by the past observations.
An algorithm $\alg = \cbrk{\paren{\policy[\round], \estimate[\round]}}_{\round = 1}^{\numRounds}$ is defined as a sequence of $\filtration[\round - 1]$ measurable random elements where $\policy_\round \in [0, 1]$ is the treatment allocation probability and $\estimate[\round]: \paren{\policy[\round], \action[\round], \reward[\round]} \mapsto \bbR_{\geq 0}$ which can be thought of as the ATE estimate produced by $\alg$ on round $\round$. 

We assume that the rewards are generated as $\reward[\round] = \I{\action[\round] = \tidx} \reward[\round][\tidx] + \I{\action[\round] = \cidx} \reward[\round][\cidx]$, where $\reward[\round][\actionIdx]$ are called the potential outcomes. 
We assume that the sequence of potential outcomes are jointly distributed according to some probability measure $\nu$ (the ``environment'') that satisfies the following assumptions.
The first assumption is that the rewards are unconfounded, which means that, given $\filtration[\round - 1]$, the potential outcomes $\reward[\round][\tidx], \reward[\round][\cidx]$ are conditionally independent of the treatment assignment $\action[\round]$, i.e $\reward[\round][\tidx], \reward[\round][\cidx] \perp \action[\round] \mid \filtration[\round - 1]$.
The second assumption is that the reward means and variances are conditionally fixed so that for all \round, we have $\E[\bandit]{\reward[\round][\actionIdx]}[\filtration[\round - 1]] = \trueReward[\actionIdx]$ and $\V[\bandit]{\reward[\round][\actionIdx]}[\filtration[\round - 1]] = \var[\actionIdx]$.

Our objective within this framework is to estimate the ATE $\ATE$, which is defined as 
\[
\ATE = \E[\bandit]{\reward(\tidx) - \reward(\cidx)}.
\]

\NewDocumentCommand{\estRewardFunction}{o}{
    \widehat{\rewardFunction}
    \IfValueT{#1}{\paren{#1}}
}
\NewDocumentCommand{\linearFunctional}{o}{
    g\IfValueT{#1}{\paren{#1}}
}

\paragraph{The \AAIPW Estimator.}
An algorithm for adaptive ATE estimation thus requires us to specify a method to compute the treatment allocation probability $\policy[\round]$ as well as the estimate $\estimate[\round]$.
A natural choice for $\estimate[\round]$ is the AIPW estimator, which given some reward estimate $\estRewardFunction$, is defined as
\begin{equation}
    \estimate[\round] = \frac{\linearFunctional[\action_\round]}{\Prob[\alg, \nu]{\action_\round}}\paren{\reward[\round] - \predReward[][\action_\round]} + \estATE[][\estRewardFunction],
\end{equation}
where $\linearFunctional[\action_\round] = \I{\action_\round = \tidx} - \I{\action_\round = \cidx}$ and $\estATE[][\estRewardFunction] = \predReward[][\tidx] - \predReward[][\cidx]$.
However, this estimator isn't well suited to sequential estimation, %
motivating \citet{Kato2020EfficientAE} to propose the Adaptive AIPW (\AAIPW) estimator.
Specifically, letting $\predReward[\round]$ denote any $\filtration[\round - 1]$ measurable function (i.e.\ a \emph{predictable} reward estimate), they defined
\begin{equation}\label{eq:aaipw-estimate}
    \aipwEstimate[\round] = \frac{\I{\action[\round] = \tidx} - \I{\action[\round] = \cidx}}{\policy[\round][\action[\round]]}\paren{\reward[\round] - \predReward[\round][\action[\round]]} + 
    \estATE[][\predReward[\round]].
\end{equation}

We also choose to use the \AAIPW estimator for a few reasons.
The first reason is that this estimator is known to be asymptotically optimal -- this is crucial for obtaining sublinear Neyman regret (which we define below).
Furthermore, recent advances in sequential analysis have developed tight confidence sequences for the \AAIPW, making it a natural choice due to its compatibility with the downstream goals of sequential testing and uncertainty quantification.

\paragraph{Neyman Allocation and Regret}
We use the mean squared error (MSE) to measure the quality of the estimates produced by our algorithm.
However, by itself, the MSE is difficult to interpret because it does not consider the inherent difficulty of the problem.
Therefore, we would like to normalize this error with respect to some problem dependent baseline which we now define and motivate.
\citet{Hahn2009AdaptiveED} show that for any fixed allocation, $\policy$, the minimum attainable MSE of any estimator is 
\begin{equation}\label{eq:fixed-policy-variance}
    \frac{\var[\tidx]}{\policy} + \frac{\var[\cidx]}{1 - \policy}.
\end{equation}
The Neyman allocation $\neymanPolicy$ is defined as the allocation which minimizes the above variance and a simple calculation shows that 
\begin{equation}
    \neymanPolicy = \frac{\stdev[\tidx]}{\stdev[\cidx] + \stdev[\tidx]}.
\end{equation}

Ideally, we would like to design an algorithm whose variance is close to this baseline and in order to understand the rate at which this occurs, we consider the Neyman regret which is defined as
\begin{equation}
    \neymanRegret_\numRounds = T \cdot \paren{\estATE[\numRounds] - \ATE}^{2} - \paren{\frac{\var[\tidx]}{\neymanPolicy} + \frac{\var[\cidx]}{1 - \neymanPolicy}}.
\end{equation}
The Neyman regret is simply the difference in the normalized MSE between the optimal variance and the MSE of the estimate produced by \alg.
This normalization guarantees that the the MSE converges to a constant (rather than 0) so that if \alg has sublinear regret, then we are guaranteed that its MSE converges to the optimal MSE.

Using the fact that the \AAIPW is unbiased, along with the fact that $\policy[\round]$ and $\predReward[\round]$ are predictable, we can rewrite the Neyman regret for the \AAIPW estimator as
\newcommand{\neymanLoss}{\ell}
\begin{equation}
    \neymanRegret_\numRounds = \sum_{\round = 1}^{\numRounds}  \E[\alg, \bandit]{\neymanLoss(\policy[\round], \predReward[\round])} - \paren{\frac{\var[\tidx]}{\neymanPolicy} + \frac{\var[\cidx]}{1 - \neymanPolicy}},
\end{equation}
where
\NewDocumentCommand{\predRewardError}{o o o}{
\ensuremath{
\varepsilon
    \IfValueT{#3}{^{#3}}%
    \IfValueT{#1}{_{#1}}%
    \IfValueT{#2}{\paren{#2}}%
}
}
\begin{equation}
    \neymanLoss(\policy, \rewardFunction) = \sum_{\aidx \in \cbrk{0, 1}} \frac{\var[\aidx]}{\policy[][\aidx]} + \frac{1 - \policy[][\aidx]}{\policy[][\aidx]}\predRewardError[\round][\aidx][2]
\end{equation}
is the Neyman loss and $\predRewardError[\round][\aidx] = \trueReward[\aidx] - \predReward[\round][\aidx]$ is the reward estimation error.

\newcommand{\policyGap}[1]{\Delta\paren{#1}}

\paragraph{Notation.} In what follows, we will let $$\acount{\round}{\aidx} = \sum_{\timeidx = 1}^{\round} \I{\action_\timeidx = \aidx}$$ denote the number of times the action $\aidx$ is selected at the end of round $\round$, 
$$\empmean[\round][\aidx] = \frac{1}{\acount{\round}{\aidx}} \sum_{\timeidx = 1}^{\round} \reward[\timeIdx] \I{ \action[\timeidx] = \aidx}$$ denote the empirical mean after $\round$ rounds, 
and $$\empvar[\round][\aidx] = \frac{1}{\acount{\round}{\aidx}} \sum_{\timeidx = 1}^{\round} \paren{\reward[\timeIdx] \I{ \action[\timeidx] = \aidx} - \empmean[\round][\action]}^2$$ denote the emprical variance.
We use \(\bigTildeO[\cdot]\) to denote asymptotic equivalence up to doubly logarithmic factors.

\section{The Optimistic Policy Tracking Algorithm}\label{sec:algorithm}

In this section, we introduce our Optimistic Policy Tracking (OPT) algorithm.
We begin with a discussion of the difficulties of adaptive ATE estimation and the suboptimality of existing approaches.
Next, we introduce our algorithm and provide insight into why it resolves the issues of existing approaches.
Finally, we conclude with a brief discussion of the algorithmic design principles underlying our algorithm and their relation to ideas in the literature.

\subsection{Preliminaries}\label{subsec:past-alg-issues}

\paragraph{The difficulties of adaptive ATE estimation.}
The primary difficulty of adaptive ATE estimation is in balancing the exploration-exploitation trade-off that arises from adaptive allocation.
If we condition on $\filtration[\round - 1]$ some algebra shows (see Lemma~\ref{lem:aaipw-variance}) that  the variance of the \AAIPW estimator is
\begin{equation}
    \sum_{\aidx} \frac{\var[\aidx]}{\policy[\round][\aidx]} + \frac{1 - \policy[\round][\aidx]}{\policy[\round][\aidx]} \paren{\trueReward[\aidx] - \predReward[\round][\aidx]}^2,
\end{equation}
which is minimized by setting $(\policy, \rewardFunction) = (\neymanPolicy, \trueReward)$ where $\neymanPolicy$ is the Neyman allocation introduced in Section~\ref{sec:problem-setup}.
Since $\neymanPolicy$ and $\trueReward$ are not known a priori, we need to design an algorithm to adaptively estimate them.
However, this is challenging because optimizing the exploration allocation separately for estimating \(\neymanPolicy\) and \(\trueReward\) (each requiring a different allocation) results in a procedure with high Neyman regret.
As such, designing an algorithm to adaptively balance the exploration of $\neymanPolicy$ and $\trueReward$ while simultaneously minimizing the Neyman regret becomes a very delicate task.

\paragraph{Insights into improvements.}
In order to better understand the improvements that can be made, we investigate previous approaches for balancing this trade-off.
To simplify the exposition, in this section we assume that $\neymanPolicy \leq \frac{1}{2}$.
The primary approach that past works (both asymptotic and nonasymptotic) have utilized is clipping the allocation.
In fact, the algorithms proposed by \citet{cook24semi}, \citet{dai2023clip}, \citet{neopane2024logarithmic} all utilize a clipping approach which computes the empirical allocation
\[\widehat \policy_\round = \frac{\empstdev[\round][\tidx]}{\empstdev[\round][\cidx] + \empstdev[\round][\tidx]},\]
and plays a clipped version of this estimate 
\[\policy[\round] = \min\cbrk{1 - \clippingSequence_\round, \max\cbrk{\clippingSequence_\round, \widehat \policy_\round}},\] 
for some carefully chosen clipping sequence $\clippingSequence_\round$ satisfying $\clippingSequence_\round \rightarrow 0$.
However, these clipping approaches have some important limitations.

The first limitation is that a clipping approach cannot be fully adaptive to the underlying problem instance because the clipping sequence must be chosen a priori.
As such, past works choose $\clippingSequence_\round$ in order to optimize the performance of their algorithm in a worst-case sense, leading to suboptimal Neyman regret for easy problem instances.
As an example, \citet{neopane2024logarithmic} show that setting $\clippingSequence_\round = \round^{-\frac{1}{3}}$ is optimal when we are not willing to bound $\neymanPolicy$ away from $0$ and $1$.
However, many practical problems are typically much easier than the worst case, and so we would like a procedure which is able to adapt to the underlying problem instance more appropriately.

The second, more pressing issue, is that clipping approaches lead to algorithms which under-exploit which is caused by the asymmetry of the Neyman loss.
To demonstrate this issue, in Figure~\ref{fig:neyman-loss}, we plot the Neyman loss $\neymanLoss(\policy, \trueReward)$ for a problem with $\neymanPolicy < \frac{1}{2}$.
In this figure we consider the Neyman loss at the points $\policy[+] = \neymanPolicy + \epsilon   $ and $\policy[-] = \neymanPolicy - \epsilon$.
It is easy to see that $\neymanLoss(\policy[+], \trueReward) < \neymanLoss(\policy[-], \trueReward)$, and in-fact this issue only worsens as $\neymanPolicy \rightarrow \cbrk{0, 1}$.
Practically, the implication is that an algorithms which under-sampled the arm with a smaller probability according to the Neyman allocation must necessarily pay a higher price than the same algorithm which over-sampled the same arm by the same amount.
This is not merely a theoretical issue --- we see in our experiments that while clipping-based approaches produce allocations which are closer to the Neyman allocation, they still have significantly worse empirical performance.

\begin{figure}
    \centering
    \includegraphics[width=\linewidth]{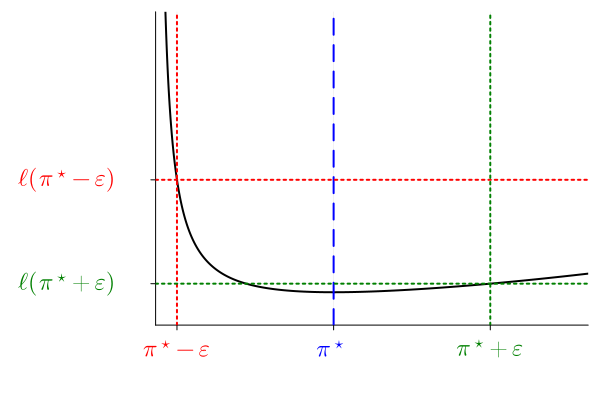}
    \caption{A plot of $\neymanLoss(\policy, \trueReward)$ where $\neymanPolicy < \frac{1}{2}$. Note how the Neyman loss is smaller for $\neymanPolicy + \epsilon$. This is due to the fact that $\neymanPolicy + \epsilon$ is closer to $\frac{1}{2}$, highlighting how less explorative allocations incur larger Neyman regret.}
    \label{fig:neyman-loss}
\end{figure}

\subsection{Optimistic Policy Tracking}
\paragraph{Main Algorithm.}
Our proposed algorithm, OPT, is designed in order to address these aforementioned issues.
Indeed, as we will see, not only does OPT better adapt to the underlying problem instances, it also better handles the exploration-exploitation trade-off when compared to prior works.
The algorithm itself if simple and plays the allocation 
\begin{equation}
    \policy[\round] = \argmin_{\policy \in \CS[\round][\neymanPolicy]} \abs{\frac{1}{2} - \policy},
\end{equation}
where $\CS[\round][\neymanPolicy]$ is a confidence sequence for the Neyman allocation.
For reward estimation, we simply use the sample mean $\predReward[\round][\aidx] = \frac{1}{\acount{\round - 1}{\aidx}} \sum_{\timeidx = 1}^{\round - 1} \reward[\timeidx] \cdot \I{\action_\round = \aidx}$.

The main difficulty now is in constructing the confidence sequence $\CS[\round][\neymanPolicy]$.
In order to do so, we first construct confidence sequences for the standard deviations of each arm.
This is accomplished in Lemma~\ref{lem:stdev-concentration}, which constructs a confidence sequence $\CS[\round][\stdev[\aidx]] = \sbrk{\LCS[\round][\stdev[\aidx]], \UCS[\round][\stdev[\aidx]]}$ whose with scales like $\bigO[\sqrt{\frac{\log \log \round + \log \frac{1}{\errorProb}}{\round}}]$.
Using these confidence sequences on $\stdev[\aidx]$, we can construct a confidence sequence for the Neyman allocation as follows 
\begin{equation} \label{eq:neyman-allocation-cs}
        \begin{aligned}
            \CS[\round][\neymanPolicy] = 
            & \left[ \frac{\LCS[\round][\stdev[\tidx]]}{\UCS[\round][\stdev[\cidx]] + \LCS[\round][\stdev[\tidx]]}, \right. \\
            & \left. \frac{\UCS[\round][\stdev[\tidx]]}{\LCS[\round][\stdev[\cidx]] + \UCS[\round][\stdev[\tidx]]}\right].    
        \end{aligned}
\end{equation}
The full algorithm is provided in Algorithm~\ref{alg:opt}.

\begin{algorithm}
\caption{\mainalgname (\mainalgnameshort)}\label{alg:opt}
\begin{algorithmic}[1]
\FOR{$\round = 1, 2, \ldots$}
    \STATE Compute $\CS[\round][\neymanPolicy]$ according to equation~\eqref{eq:neyman-allocation-cs}
    \STATE Set $\policy_\round = \argmin_{\policy \in \CS[\round][\neymanPolicy]} \abs{\frac{1}{2} - \policy}$
    \STATE Sample $\action_\round \sim \text{Bernoulli}(\policy_\round)$
    \STATE Observe $\reward_\round \sim \nu(\action_\round)$
    \STATE Compute $\aipwEstimate[\round]$ according to equation~\eqref{eq:aaipw-estimate}
\ENDFOR
\end{algorithmic}
\end{algorithm}

\paragraph{Interpretation as Optimism.}
We can interpret our algorithm as implementing the celebrated principle of \emph{optimism in the face of uncertainty}.
Optimism is an algorithmic design principle which is the basis of many well-known MAB and reinforcement learning algorithms (such as the ``upper confidence bound'').
Roughly speaking, the principle states that we should act as if the underlying problem instance is the easiest instance, which is feasible according to our past observations.
In the regret minimization framework, this means playing the arm which has the largest upper confidence bound.
For adaptive ATE estimation, this involves playing the allocation that is closest to $\frac{1}{2}$.
This is because the difficulty of a problem is determined by the deviation of the Neyman allocation from $\frac{1}{2}$ -- when the Neyman allocation is close to $\frac{1}{2}$, the objectives of exploration and exploitation are aligned.
Suppose the Neyman allocation deviates from $\frac{1}{2}$, then as the allocation we play converges to the Neyman allocation, we are necessarily under-sampling one arm and thus \emph{slowing} down our convergence to the Neyman allocation.
This intuition is supported by the results of \citet{neopane2024logarithmic} and \citet{dai2023clip} who show that the Neyman regret scales inversely with $\abs{\policy - \frac{1}{2}}$.
Therefore, implementing optimism for adaptive ATE estimation involves playing the most feasible allocation (as determined by our past observations) closest to $\frac{1}{2}$ -- this is exactly the driving principle behind our \mainalgnameshort algorithm.

\section{Results}\label{sec:results}
\newcommand{\minPolicy}{\underline{\policy}}
\newcommand{\minNeymanPolicy}{\underline{\neymanPolicy}}
In this section, we build our intuition on the behavior of \mainalgnameshort and conclude by stating our main result which is a bound on the Neyman regret of \mainalgnameshort.

Before we begin, we introduce some additional notation which will make our exposition easier.
For any $\policy$, we define $\policyGap{\policy} = \abs{\frac{1}{2} - \policy}$ and $\minPolicy = \min\cbrk{\policy, 1 - \policy}$. 
Additionally, we let $\stdevGap = \stdev[\tidx] - \stdev[\cidx]$.

Our analysis splits the behavior of \mainalgnameshort into two phases, an exploration \textit{exploration phase} and the \textit{concentration phase}.
We define the exploration phase as the rounds for which $\policy[\round] = \frac{1}{2}$.
During the early stages of interaction, we expect that each arm has been played sufficiently few times so that $\frac{1}{2} \in \CS[\round][\neymanPolicy]$, and the exploration time $\exploreTime$ is the length of this phase.
Intuitively, during this phase, there is not enough information in our observations to reliably predict $\neymanPolicy$ and so our best choice is to explore each arm uniformly.
Fortunately, the length of this phase is not too long, and our first result bounds the length of this phase in terms of the absolute distance between the standard deviations.
\begin{lemma}
    \label{lem:exploration_phase_length}
    Define the exploration time as
    \begin{equation}
        \exploreTime = \min\cbrk{\round : \policy_\round \neq \frac{1}{2}}.
    \end{equation}
    Then, with probability at least $1 - \errorProb$, we have
    \begin{equation}
        \exploreTime = \bigTildeO[\stdevGap^{-2} \log\frac{1}{\delta}].
    \end{equation}
\end{lemma}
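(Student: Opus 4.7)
The plan is to translate the condition $\policy[\round] = \frac{1}{2}$ into an overlap condition for the two standard-deviation confidence intervals, show that while exploration is ongoing the arm counts concentrate around $\round/2$, and then invert the width scaling given by Lemma~\ref{lem:stdev-concentration}.

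First I would observe that, by the minimization rule in Algorithm~\ref{alg:opt}, $\policy[\round] = \frac{1}{2}$ if and only if $\frac{1}{2} \in \CS[\round][\neymanPolicy]$. A short calculation using the explicit form~\eqref{eq:neyman-allocation-cs} shows that $\frac{1}{2}$ lies in $\CS[\round][\neymanPolicy]$ exactly when the two intervals $[\LCS[\round][\stdev[\aidx]], \UCS[\round][\stdev[\aidx]]]$, $\aidx \in \cbrk{0,1}$, have nonempty intersection. Hence $\exploreTime$ is the first round at which those two intervals become disjoint. Without loss of generality take $\stdev[\tidx] > \stdev[\cidx]$ so that $\stdevGap > 0$; by Lemma~\ref{lem:stdev-concentration}, on a good event $\goodEvent_\stdev$ of probability at least $1 - \errorProb/2$ we have $\stdev[\aidx] \in [\LCS[\round][\stdev[\aidx]], \UCS[\round][\stdev[\aidx]]]$ for every $\round$ and every $\aidx$, with width $\bigO[\sqrt{(\log \log \acount{\round}{\aidx} + \log(1/\errorProb))/\acount{\round}{\aidx}}]$. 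On this event, the separation $\LCS[\round][\stdev[\tidx]] > \UCS[\round][\stdev[\cidx]]$ is implied once $\stdevGap$ exceeds the sum of the two widths.

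To convert widths indexed by $\acount{\round}{\aidx}$ into widths indexed by $\round$, I would couple with the process that always plays $\frac{1}{2}$: for every $\round \leq \exploreTime$ the policy is deterministically $\policy[\timeIdx] = \frac{1}{2}$, so the assignments agree with i.i.d.\ $\bernoulli{1/2}$ draws, and a multiplicative Chernoff bound gives a good event $\goodEvent_N$ of probability at least $1 - \errorProb/2$ on which $\acount{\round}{\aidx} \geq \round/3$ simultaneously for both $\aidx$ and for every $\round \gtrsim \log(1/\errorProb)$. Intersecting with $\goodEvent_\stdev$, the separation condition reduces to
\begin{equation*}
    \stdevGap \;\gtrsim\; \sqrt{\frac{\log \log \round + \log(1/\errorProb)}{\round}},
\end{equation*}
and inverting in $\round$ yields $\round = \bigTildeO[\stdevGap^{-2} \log(1/\errorProb)]$, which therefore upper bounds $\exploreTime$ on $\goodEvent_\stdev \cap \goodEvent_N$.

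The main obstacle is the mild circularity that $\acount{\round}{\aidx}$ is random while the confidence sequences are anytime-valid, so we cannot directly plug $\round/2$ into the width bound. It resolves cleanly via the coupling above: because the policy is deterministic throughout exploration, the Chernoff step is non-adaptive and fully decoupled from the standard-deviation concentration, and the two high-probability events can simply be intersected via a union bound.
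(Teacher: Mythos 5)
Your proposal is correct and follows essentially the same route as the paper: reduce $\policy[\round]=\tfrac{1}{2}$ to the condition that the two standard-deviation confidence intervals from Lemma~\ref{lem:stdev-concentration} still overlap, use the fact that the policy is $\tfrac{1}{2}$ throughout exploration to pin $\acount{\round}{\aidx}\approx\round/2$, and invert the width $\sqrt{\ell(\round,\errorProb)/\round}\lesssim\stdevGap$ to get $\exploreTime=\bigTildeO[\stdevGap^{-2}\log\tfrac{1}{\errorProb}]$. The only (immaterial) difference is that you control the arm counts by coupling with i.i.d.\ $\bernoulli{1/2}$ draws plus a Chernoff bound, whereas the paper uses a time-uniform concentration bound on $\acount{\round}{\aidx}-\sum_\timeidx\policy[\timeidx][\aidx]$; both yield the same $\acount{\round}{\aidx}\gtrsim\round/3$ guarantee.
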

The proof of this result is given in Appendix~\ref{app:exploration-phase}.
This result shows that \mainalgnameshort is able to adapt to the difficulty of the underlying problem instance --- if the gap between the standard deviations is large, then the exploration phase will be short, and if the gap is small, then the exploration phase will be longer.

Once the exploration phase is over, the algorithm will be able to focus on the concentration phase.
In this phase, optimism guarantees $\policyGap{\policy_\round} < \policyGap{\neymanPolicy}$.
Therefore, we can control the number of times each arm is played which we can in turn convert to bounds on $\abs{\policy[\round] - \neymanPolicy}$.

Our next result formalizes this intuition.
\begin{lemma}\label{lem:policy-difference-bound}
    With probability at least $1 - \errorProb$, we have that
    \begin{equation}
        \policy[\round] - \neymanPolicy = \bigTildeO[\sqrt{\frac{\log \frac{1}{\errorProb}}{\minNeymanPolicy \cdot \round}} \cdot \frac{1}{\stdev[\cidx] + \stdev[\tidx]}].
    \end{equation}
\end{lemma}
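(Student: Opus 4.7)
The plan is to combine three ingredients: optimism places $\policy_\round$ inside the Neyman-allocation confidence sequence $\CS[\round][\neymanPolicy]$; each arm is sampled at rate at least $\minNeymanPolicy$ per round, so the per-arm standard-deviation confidence sequences shrink at rate $\sqrt{1/(\minNeymanPolicy \cdot \round)}$; and a sensitivity analysis of the map $(\stdev[\cidx],\stdev[\tidx]) \mapsto \stdev[\tidx]/(\stdev[\cidx]+\stdev[\tidx])$ converts those widths into a width on $\neymanPolicy$ carrying the promised $1/(\stdev[\cidx]+\stdev[\tidx])$ factor.

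First I would define the good event $\goodEvent$ on which $\stdev[\aidx] \in \CS[\round][\stdev[\aidx]]$ for every $\round$ and every $\aidx \in \cbrk{\cidx, \tidx}$. By Lemma~\ref{lem:stdev-concentration} and a union bound over the two arms, $\Prob{\goodEvent} \geq 1 - \errorProb$. On $\goodEvent$, the construction in equation~\eqref{eq:neyman-allocation-cs} immediately gives $\neymanPolicy \in \CS[\round][\neymanPolicy]$. Since Algorithm~\ref{alg:opt} sets $\policy_\round$ to be the point of $\CS[\round][\neymanPolicy]$ closest to $\frac{1}{2}$ and $\neymanPolicy$ itself lies in this set, this yields both $\abs{\policy_\round - \neymanPolicy} \leq \mathrm{width}(\CS[\round][\neymanPolicy])$ and $\minPolicy_\round \geq \minNeymanPolicy$ (projecting $\neymanPolicy$ toward $\frac{1}{2}$ can only enlarge $\min\cbrk{\policy, 1-\policy}$).

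Next I would lower-bound $\acount{\round}{\aidx}$. Because $\policy[\timeidx][\aidx] \geq \minNeymanPolicy$ for every $\timeidx \leq \round$ on $\goodEvent$, the predictable compensator of the arm-pull process satisfies $\sum_{\timeidx = 1}^{\round} \policy[\timeidx][\aidx] \geq \minNeymanPolicy \cdot \round$. A multiplicative Chernoff or Freedman-type inequality applied to the bounded martingale differences $\I{\action_\timeidx = \aidx} - \policy[\timeidx][\aidx]$ then gives $\acount{\round}{\aidx} \geq \frac{1}{2} \minNeymanPolicy \cdot \round$ with probability at least $1 - \errorProb$, up to doubly logarithmic factors. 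Plugging this count into Lemma~\ref{lem:stdev-concentration} yields, for each arm, $w_\round := \UCS[\round][\stdev[\aidx]] - \LCS[\round][\stdev[\aidx]] = \bigTildeO[\sqrt{\log(1/\errorProb)/(\minNeymanPolicy \cdot \round)}]$.

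Finally I would translate $w_\round$ into $\mathrm{width}(\CS[\round][\neymanPolicy])$ by a direct expansion of the endpoints in equation~\eqref{eq:neyman-allocation-cs}:
\[
\frac{\UCS[\round][\stdev[\tidx]]}{\LCS[\round][\stdev[\cidx]]+\UCS[\round][\stdev[\tidx]]} - \frac{\LCS[\round][\stdev[\tidx]]}{\UCS[\round][\stdev[\cidx]]+\LCS[\round][\stdev[\tidx]]} = \frac{\UCS[\round][\stdev[\tidx]]\UCS[\round][\stdev[\cidx]] - \LCS[\round][\stdev[\tidx]]\LCS[\round][\stdev[\cidx]]}{\paren{\LCS[\round][\stdev[\cidx]]+\UCS[\round][\stdev[\tidx]]}\paren{\UCS[\round][\stdev[\cidx]]+\LCS[\round][\stdev[\tidx]]}}.
\]
Since every endpoint lies within $w_\round$ of the true $\stdev[\aidx]$, the numerator equals $2 w_\round(\stdev[\cidx]+\stdev[\tidx]) + O(w_\round^2)$ and the denominator is $(\stdev[\cidx]+\stdev[\tidx])^2(1-o(1))$, giving $\mathrm{width}(\CS[\round][\neymanPolicy]) = \bigTildeO[w_\round/(\stdev[\cidx]+\stdev[\tidx])]$, which is the claimed bound. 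I expect the sensitivity step to be the main obstacle: the clean $1/(\stdev[\cidx]+\stdev[\tidx])$ scaling only emerges once $w_\round \ll \stdev[\cidx]+\stdev[\tidx]$, so the preceding burn-in (a counterpart of $\exploreTime$ from Lemma~\ref{lem:exploration_phase_length}) must be absorbed into the $\bigTildeO$ notation, either by appealing directly to Lemma~\ref{lem:exploration_phase_length} or by handling the short transient separately.
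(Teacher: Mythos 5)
Your proposal is correct and follows essentially the same route as the paper's proof: condition on the time-uniform good event for the standard-deviation confidence sequences, use optimism to place $\policy_\round$ between $\neymanPolicy$ and $\tfrac{1}{2}$ so that each arm's pull count is at least $\minNeymanPolicy\cdot\round$ up to martingale fluctuations, and then propagate the resulting $\bigTildeO[\sqrt{\ell(\round,\errorProb)/(\minNeymanPolicy\round)}]$ per-arm widths through the map defining $\CS[\round][\neymanPolicy]$ to extract the $1/(\stdev[\cidx]+\stdev[\tidx])$ factor. The only differences are bookkeeping ones --- the paper bounds the one-sided gap to the upper endpoint and uses the sharper rate $\tfrac{1}{2}$ for the over-sampled arm, and it handles the burn-in/denominator issue you flag by restricting to $\round\geq\exploreTime$ so that the confidence width is at most $\stdevGap/8$ --- none of which changes the argument.
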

The reason for the appearance of $\minNeymanPolicy$ is due to the convergence of $\policy_\round$ based on the number of times that both arms have been played. If we play one arm too often, then the width of the confidence interval for $\neymanPolicy$ would depend entirely on the width of the lesser sampled arm.

Our main result combines the above lemmas to provide a bound on the Neyman regret.
\begin{theorem}\label{thm:neyman-regret-bound}
    With probability at least $1 - \errorProb$, the Neyman regret of \mainalgnameshort is upper-bounded as 
    \begin{equation}
        \bigTildeO[\stdevGap^{-2} + \paren{\frac{1}{\minNeymanPolicy}}^2\log \numRounds].
    \end{equation}
\end{theorem}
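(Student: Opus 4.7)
The plan is to leverage the phase decomposition described in the text and apply Lemma~\ref{lem:exploration_phase_length} and Lemma~\ref{lem:policy-difference-bound} in the appropriate regimes. Starting from the rewritten regret
\[
\neymanRegret_\numRounds = \sum_{\round=1}^{\numRounds}\E[\alg, \bandit]{\neymanLoss(\policy[\round], \predReward[\round])} - \neymanLoss(\neymanPolicy, \trueReward),
\]
I would split the per-round excess loss as
\[
\neymanLoss(\policy[\round], \predReward[\round]) - \neymanLoss(\neymanPolicy, \trueReward) = \underbrace{\bigl[\neymanLoss(\policy[\round], \trueReward) - \neymanLoss(\neymanPolicy, \trueReward)\bigr]}_{\text{allocation error}} + \underbrace{\bigl[\neymanLoss(\policy[\round], \predReward[\round]) - \neymanLoss(\policy[\round], \trueReward)\bigr]}_{\text{estimation error}},
\]
and bound each sum on the high probability event that Lemmas~\ref{lem:exploration_phase_length} and~\ref{lem:policy-difference-bound} both hold (union bound on $\errorProb$).

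\textbf{Exploration phase.} For $\round \le \exploreTime$, each per-round excess loss is bounded by a problem-dependent constant (since $\policy_\round = 1/2$ and the sample-mean reward estimates are sub-Gaussian). Lemma~\ref{lem:exploration_phase_length} then caps the cumulative contribution by $\bigTildeO[\stdevGap^{-2}\log(1/\errorProb)]$, which is the first term of the theorem.

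\textbf{Allocation error in the concentration phase.} Let $f(\policy) = \var[\tidx]/\policy + \var[\cidx]/(1-\policy)$, so that $\neymanLoss(\policy, \trueReward) = f(\policy)$. Since $\neymanPolicy$ minimizes $f$, a second-order Taylor expansion gives
\[
f(\policy_\round) - f(\neymanPolicy) \le \tfrac{1}{2}\,f''(\xi)\,(\policy_\round - \neymanPolicy)^2,
\]
for some $\xi$ between $\policy_\round$ and $\neymanPolicy$. By the optimism step, $\policyGap{\policy_\round}\le \policyGap{\neymanPolicy}$, so $\policy_\round$ lies between $\neymanPolicy$ and $1 - \neymanPolicy$ and $f''$ at any such $\xi$ is bounded by a quantity of order $(\stdev[\tidx]+\stdev[\cidx])^4/(\stdev[\tidx]\stdev[\cidx])$. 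Plugging in the bound $(\policy_\round - \neymanPolicy)^2 = \bigTildeO\bigl[\tfrac{\log(1/\errorProb)}{\minNeymanPolicy \cdot \round(\stdev[\tidx]+\stdev[\cidx])^2}\bigr]$ from Lemma~\ref{lem:policy-difference-bound} and summing $\sum_{\round>\exploreTime} 1/\round = O(\log \numRounds)$ yields a contribution of order $\bigTildeO[(1/\minNeymanPolicy)^2 \log \numRounds]$ after absorbing $(\stdev[\tidx]+\stdev[\cidx])^2/(\stdev[\tidx]\stdev[\cidx])$ into $(1/\minNeymanPolicy)^2$ via the identity $\minNeymanPolicy = \min(\stdev[\tidx],\stdev[\cidx])/(\stdev[\tidx]+\stdev[\cidx])$.

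\textbf{Estimation error in the concentration phase.} The estimation term equals $\sum_\aidx \tfrac{1 - \policy_\round(\aidx)}{\policy_\round(\aidx)}\predRewardError[\round][\aidx][2]$ with $\predRewardError[\round][\aidx] = \trueReward[\aidx] - \empmean[\round-1][\aidx]$. The prefactor is at most $O(1/\minNeymanPolicy)$ since $\policy_\round \in [\minNeymanPolicy, 1-\minNeymanPolicy]$. For the squared error, a time-uniform sub-Gaussian tail bound gives $\predRewardError[\round][\aidx][2] = \bigTildeO[\log(1/\errorProb)/\acount{\round-1}{\aidx}]$. The key is a lower bound on the counts: combining Lemma~\ref{lem:policy-difference-bound} with a Freedman-style concentration of $\acount{\round-1}{\aidx}$ around $\sum_{\timeIdx \le \round-1}\policy_\timeIdx(\aidx)$ gives $\acount{\round-1}{\aidx} = \Omega(\minNeymanPolicy \cdot \round)$ after the exploration phase. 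Summing $\sum_\round 1/\round$ then produces another $\bigTildeO[(1/\minNeymanPolicy)^2 \log \numRounds]$ term.

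\textbf{Main obstacle.} The cleanest part is the allocation error via Taylor expansion; the technically delicate step is the estimation-error bound, since it requires (i) converting the high-probability policy-concentration of Lemma~\ref{lem:policy-difference-bound} into a pointwise lower bound on $\acount{\round-1}{\aidx}$ that is valid for all $\round > \exploreTime$ simultaneously, and (ii) a time-uniform concentration for $\empmean[\round-1][\aidx]$ under the adaptive allocation. Both obstacles are standard once one invokes martingale concentration (e.g.\ a Freedman inequality for the counts and a self-normalized Hoeffding-type bound for $\empmean$), and union-bounding their failure probabilities with those of Lemmas~\ref{lem:exploration_phase_length} and~\ref{lem:policy-difference-bound} completes the argument, giving the advertised $\bigTildeO[\stdevGap^{-2} + (1/\minNeymanPolicy)^2 \log \numRounds]$ bound.
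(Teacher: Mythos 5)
Your proposal follows essentially the same route as the paper's proof: the same exploration/concentration phase split with the $4\exploreTime = \bigTildeO[\stdevGap^{-2}]$ contribution from the uniform phase, the same decomposition of the concentration-phase excess loss into an allocation term (which the paper bounds by citing Lemma 4.3 of \citet{neopane2024logarithmic}, i.e.\ exactly the quadratic/Taylor-type bound you write out explicitly) and a reward-estimation term controlled via $\acount{\round}{\aidx} = \Omega(\minNeymanPolicy\round)$ and a time-uniform mean bound, all on a common good event obtained by union bound. The argument is correct and matches the paper's structure.
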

The first term above is the per-round Neyman regret during the exploration phase and our bound follows from the fact that the Neyman regret is at most $4$ when we play $\policy[\round] = \frac{1}{2}$.
The second term in our bound is the Neyman regret during the concentration phase and follows from the application of Lemma~\ref{lem:policy-difference-bound} in conjunction with Lemma 2 of \cite{neopane2024logarithmic} showing that the Neyman regret scales according to $\abs{\neymanPolicy - \policy[\round]}^2 \approx \frac{1}{\neymanPolicy \cdot \round}$.
Since the contribution to the Neyman regret from the reward estimation also scales like $\frac{1}{\neymanPolicy \cdot \round}$, taking a sum over these two terms gives us the desired result.

In order to get a better understanding of our result, we consider the behavior of a hypothetical algorithm which plays the optimal Neyman allocation $\neymanPolicy$ but incurs a loss based on the empirically computed allocation, $\policy_\round$.
A simple calculation shows that $\policy_\round$ converges to $\neymanPolicy$ at a rate of $\bigTheta[\paren{\neymanPolicy \cdot \round}^{-\frac{1}{2}}]$. 
This in turn implies that the Neyman regret would be 
\begin{equation}
     \bigTildeO[\paren{\frac{1}{\minNeymanPolicy}}^2\log \numRounds],
\end{equation}
which, modulo the regret from the clipping phase, is the same as the Neyman regret incurred by \mainalgnameshort.
This suggests that our algorithm is correctly adapting to the difficulty of the problem.

\paragraph{Comparison with \clipSMT.} 
At first glance, our result appears to be quite similar to the Neyman regret bound from \cite{neopane2024logarithmic} who similarly show a logarithmic bound on the Neyman regret.
However, this is not the case, due to differing definitions of the Neyman regret.
In \cite{neopane2024logarithmic}, the Neyman regret is defined with respect to the minimum variance over allocations for the fixed IPW estimator.
Our Neyman regret is defined with respect to the minimum attainable variance over any \emph{pair} of estimators and allocations.
This means that while our regret bounds share a similar form, the performance of our algorithm is significantly better than the performance of the \clipSMT algorithm.
Concretely, using our definition of the Neyman regret to characterize the performance of the \clipSMT algorithm (as well as the \clipOGD algorithm), we see that these algorithms actually have \emph{linear} Neyman regret since the variance of their policies cannot converge to the minimum attainable variance.



\section{Experiments}\label{sec:experiments}

\begin{figure*}[!t]
    \begin{subfigure}[b]{0.49\textwidth}
        \centering
        \includegraphics[width=\linewidth]{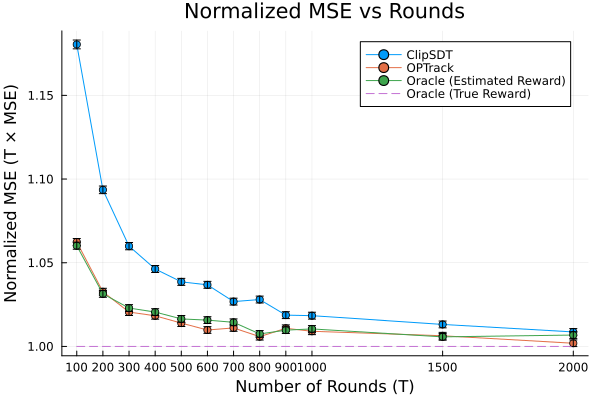}
        \caption{$\mu_\cidx = 0.5$}
    \end{subfigure}
        \centering
    \begin{subfigure}[b]{0.49\textwidth}
        \centering
        \includegraphics[width=\linewidth]{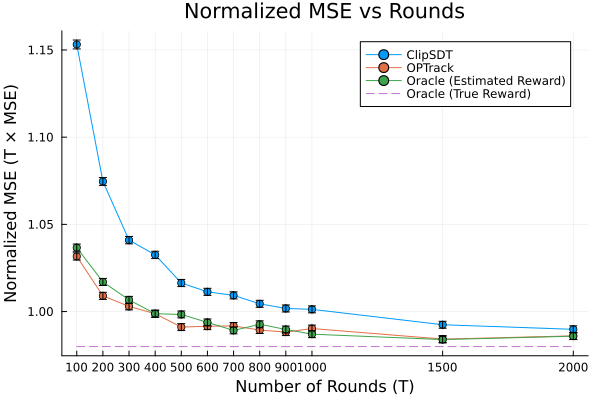}
        \caption{$\mu_\cidx = 0.4$}
    \end{subfigure}
    \hfill
    \begin{subfigure}[b]{0.49\textwidth}
        \centering
        \includegraphics[width=\linewidth]{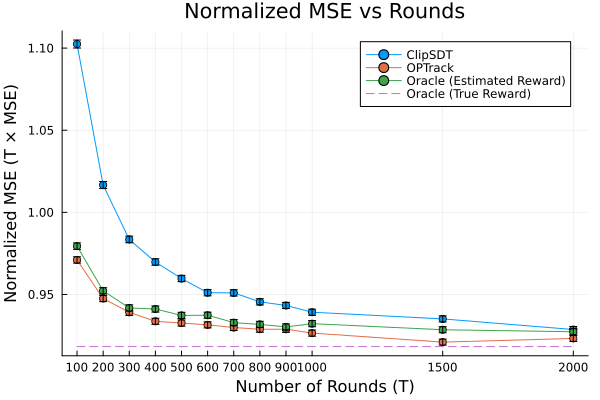}
        \caption{$\mu_\cidx = 0.3$}
    \end{subfigure}
    \label{fig:normalized_mse}
        \centering
    \begin{subfigure}[b]{0.49\textwidth}
        \centering
        \includegraphics[width=\linewidth]{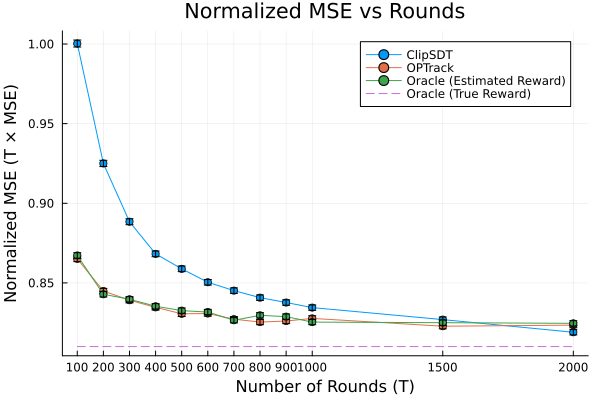}
        \caption{$\mu_\cidx = 0.2$}
    \end{subfigure}
    \hfill
    \centering
    \begin{subfigure}[b]{0.49\textwidth}
        \centering
        \includegraphics[width=\linewidth]{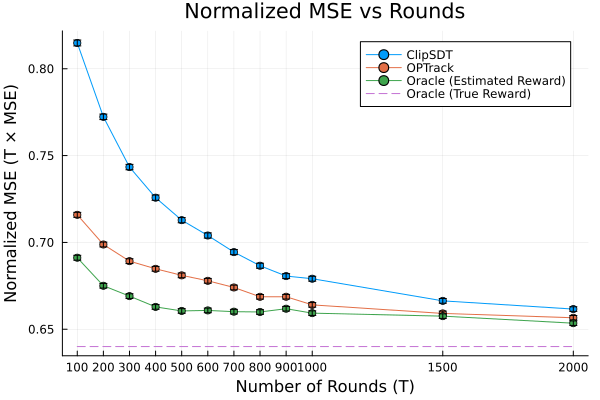}
        \caption{$\mu_\cidx = 0.1$}
    \end{subfigure}
    \centering
    \begin{subfigure}[b]{0.49\textwidth}
        \centering
        \includegraphics[width=\linewidth]{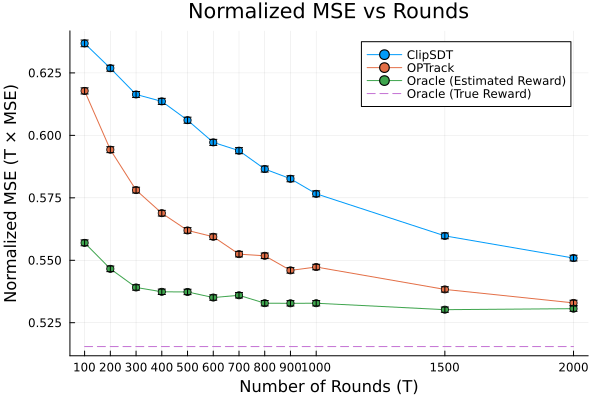}
        \caption{$\mu_\cidx = 0.05$}
    \end{subfigure}
    \hfill
    \caption{Normalized MSE (\(\numRounds \cdot \text{MSE}\)) for \mainalgnameshort, \clipSDT, and the oracle baselines across six problem instances, each with Bernoulli rewards with $\mu_\tidx = \frac{1}{2}$ and varying $\mu_\cidx$. Results are averaged over \NUMSIMS  simulations. \mainalgnameshort consistently outperforms \clipSDT, with a 10-15
    improvement for smaller \(\numRounds\). Notably, \mainalgnameshort is competitive with the reward estimation oracle and even outperforms it in some cases due to better exploration of the reward function early on. As \(\numRounds\) increases, all algorithms converge to the oracle baseline. }
    \label{fig:simulation}
\end{figure*}

In this section, we present experiments to evaluate the empirical performance of our algorithm. We compare \mainalgnameshort against the \clipSDT algorithm proposed by \citet{cook24semi}, as well as two oracle algorithms that follow the Neyman allocation. One of these oracle algorithms sequentially estimates the reward, while the other has access to the true reward.

We do not include results for the \clipSMT and \clipOGD algorithms, as their variances fail to converge to the oracle variance, consistently leading to significantly worse performance than the other algorithms which obscures the clarity of the plots. This outcome is expected, given that both algorithms incur linear Neyman regret.

We consider 6 problem instances where both arms follows Bernoulli distributions.
For each of these problem instances, we fix the treatment mean to be $\frac{1}{2}$ and vary the control mean in order to vary the Neyman allocation.
For each of these problems, we run \mainalgnameshort, \clipSDT, and the reward estimation oracle for $\numRounds$ ranging from $100$ to $2000$ and plot the normalized MSE ($\numRounds \cdot \text{MSE}$) over \NUMSIMS simulations.
For the oracle baseline, we explicitly compute the MSE.
The results of these simulations are given in Figure~\ref{fig:simulation}.

Our results show that \mainalgnameshort consistently outperforms \clipSDT over all problem instances.
The difference between the two becomes negligible for larger values of $\numRounds$ which is expected since all algorithms eventually converge to the Neyman allocation and true reward function.
However, for smaller sample sizes, we see that \mainalgnameshort provides around a 10-15 percent improvement over \clipSDT.
This improvement is due to the reasons given in Section~\ref{sec:algorithm}.

The performance of \mainalgnameshort is competitive with the reward estimation oracle for moderate values of $\neymanPolicy$ and even outperforms the reward estimation oracle on some problem instances.
This is because \mainalgnameshort is more exploratory and obtains better reward estimates early on.

\section{Conclusion}\label{sec:conclusion}

This work proposed a new algorithm for adaptive ATE estimation.
We identified some key issues with past approaches which limited their performance both empirically and theoretically and demonstrated how to resolve them.
Our proposed solution borrows ideas from the literature on Regret Minimization and showed how to extend some of these ideas to the problem of adaptive ATE estimation.
We believe that these ideas will be crucial for developing adaptive algorithms for inference for more complicated settings as well as for related problems like Off-Policy Evaluation.

\subsection{Future Work}
We believe there are a few directions for future work that we find very compelling.
The first is the extension of our algorithm to the setting with covariates and with more sophisticated reward estimation.
In the causal inference literature, practitioners typically use nonparametric regression to estimate the $\trueReward$ and so extending our ideas to work with such estimators warrants more attention.
Another interesting direction is the extension to multiple arms.
Here we believe that the correct extension is to compute a confidence interval around the Neyman allocation, and then project this set onto the Uniform distribution over the actions.
The primary difficulty for this extension is in the analysis -- if we apply our techniques directly, this will result in an additional factor of $K$ in the term that is dependent on $\numRounds$, where $K$ is the number of arms.
It is an interesting question to see if our analysis can be improved to remove this additional factor.
Finally extending these ideas to more complicated interaction protocols such as Reinforcement Learning warrants further study.

\section{Impact Statement}
While our paper is primarily theoretical, we believe that the insights developed will be important for downstream applications such as causal inference which has broad applications over a variety of fields including clinical trials and A/B testing.


\bibliography{ref}
\bibliographystyle{icml2025}

\appendix
\onecolumn

\section{Analysis of \mainalgname}

    \paragraph{Preliminaries}

    We will begin by defining our good event.
    Consider the following events
    \begin{align}
        \goodEvent_{\sigma}(\errorProb) &= \bigcap_{\aidx, \round \in \bbN} \cbrk{\abs{\empstdev[\round][\aidx] - \stdev[\aidx]} \leq 4.2\sqrt{\frac{\ell(\round, \errorProb)}{\round}}} \\
        \goodEvent_{N}(\errorProb) &= \bigcap_{\round \in \bbN} \cbrk{\abs{\acount{\round}{\aidx} - \sum \policy[\round][\aidx]} \leq \sqrt{\round \ell(\round, \errorProb)}} \\
        \goodEvent_{r}(\errorProb) &= \bigcap_{\aidx \round \in \bbN} \cbrk{\abs{\predReward[\round][\aidx] - \trueReward[\aidx]} \leq \sqrt{\round \ell(\round, \errorProb)}}.
    \end{align}

    Let $\tilde \errorProb = \frac{\errorProb}{5}$ and define the good event $\goodEvent(\tilde \errorProb) =  \goodEvent_{\sigma}(\errorProb) \cap \goodEvent_{N}(\errorProb) \cap  \goodEvent_{r}(\errorProb)$. 
    Applying Lemma~\ref{lem:stdev-concentration} to control $\goodEvent_{\sigma}(\tilde \errorProb)$ and Theorem 1 from \cite{Howard2018TimeUniform} to control $\goodEvent_{N}(\tilde \errorProb)$, and $\goodEvent_{r}(\tilde \errorProb)$
    shows that the event $\goodEvent(\tilde \errorProb)$ occurs with probability at least $1 - \delta$.
    Throughout the remained of this section, we assume the good event holds.

    \subsection{Proof of Theorem 1}
        We begin by decomposing the Neyman regret
        \begin{align}
            \neymanRegret_{\numRounds} 
                &= \sum_{\round = 1}^{\numRounds} \neymanLoss(\policy[\round], \predReward[\round]) \\
                &= \sum_{\round = 1}^{\numRounds} \sum_{\aidx} \paren{
                        \frac{\var[\aidx]}{\policy[\round][\aidx]} 
                        + \frac{1 - \policy[\round][\aidx]}{\policy[\round][\aidx]}\predRewardError[\round][\aidx][2] 
                        - \frac{\var[\aidx]}{\neymanPolicy[\aidx]}
                } \\
                &= \sum_{\round = 1}^{\exploreTime} \sum_{\aidx} \paren{
                        \frac{\var[\aidx]}{\policy[\round][\aidx]} 
                        + \frac{1 - \policy[\round][\aidx]}{\policy[\round][\aidx]}\predRewardError[\round][\aidx][2] 
                        - \frac{\var[\aidx]}{\neymanPolicy[\aidx]}
                    }
                    + \sum_{\round = \exploreTime + 1}^{\numRounds} \sum_{\aidx} \paren{
                        \frac{\var[\aidx]}{\policy[\round][\aidx]} 
                        + \frac{1 - \policy[\round][\aidx]}{\policy[\round][\aidx]}\predRewardError[\round][\aidx][2] 
                        - \frac{\var[\aidx]}{\neymanPolicy[\aidx]}
                    }.
        \end{align}
        
        For the first term, we have that $\policy_\round = \frac{1}{2}$, and $\predRewardError[\round][\aidx] \leq 1$, so that
        \begin{align}
            &\sum_{\aidx} \paren{
                        \frac{\var[\aidx]}{\policy[\round][\aidx]} 
                        + \frac{1 - \policy[\round][\aidx]}{\policy[\round][\aidx]}\predRewardError[\round][\aidx][2] 
                        - \frac{\var[\aidx]}{\neymanPolicy[\aidx]}
                    } \\
            &\leq \sum_{\aidx} \paren{
                        \frac{\var[\aidx]}{\policy[\round][\aidx]} 
                        - \frac{\var[\aidx]}{\neymanPolicy[\aidx]}
                    } + 2 \\
            &\leq 4,
        \end{align}
        to so that the regret from the exploration phase is $4 \exploreTime$.
        
        For the second term, we have
        \begin{align}
            &\sum_{\aidx} \paren{
                        \frac{\var[\aidx]}{\policy[\round][\aidx]} 
                        + \frac{1 - \policy[\round][\aidx]}{\policy[\round][\aidx]}\predRewardError[\round][\aidx][2] 
                        - \frac{\var[\aidx]}{\neymanPolicy[\aidx]}
                    } \\
            &= \sum_{\aidx} \paren{
                        \frac{\var[\aidx]}{\policy[\round][\aidx]} 
                        - \frac{\var[\aidx]}{\neymanPolicy(\aidx)}
                    }
                + \sum_{\aidx} \paren{
                        \frac{1 - \policy[\round][\aidx]}{\policy[\round][\aidx]}\predRewardError[\round][\aidx][2] 
                    } \\
        \end{align}
        We can bound the first term by applying Lemma 4.3 from \cite{neopane2024logarithmic} in conjunction with so that
        \begin{align}
            \sum_{\aidx} \paren{
                        \frac{\var[\aidx]}{\policy[\round][\aidx]} 
                        - \frac{\var[\aidx]}{\neymanPolicy(\aidx)}
                    }
            &\leq \frac{625}{\paren{\stdev[0] + \stdev[1]}^2}\frac{\ell(\round, \errorProb)}{\neymanPolicy \round}
        \end{align}

        In order to bound the second term, we observe that on the good event
        \begin{align}
            \abs{\trueReward[\aidx] - \predReward[\round][\aidx]} 
                &\leq \sqrt{\frac{\ell(\round, \errorProb)}{\acount{\round}{\aidx}}} \\
                &\leq \sqrt{\frac{\ell(\round, \errorProb)}{\neymanPolicy\round - \sqrt{\round \ell(\round, \errorProb)}}} \\
                &\leq 2  \sqrt{\frac{\ell(\round, \errorProb)}{\neymanPolicy\round}},
        \end{align}
        where in the last line we have again applied Lemma 4.5 from \cite{neopane2024logarithmic}.

        Therefore, we have that
        \begin{equation}
            \sum_{\aidx} \paren{
                        \frac{1 - \policy[\round][\aidx]}{\policy[\round][\aidx]}\predRewardError[\round][\aidx][2] 
                    }
            \leq \frac{8\ell(\round, \errorProb)}{\paren{\neymanPolicy}^2\round}
        \end{equation}

        We can bound the sum of these two terms as $625 \frac{\ell(\round, \errorProb)}{\paren{\neymanPolicy}^2\round}$.
        The result then follows by summing this over $\round < \numRounds$ and adding the Neyman regret from the exploration phase.

    \subsection{Proof of Lemma~\ref{lem:exploration_phase_length}}\label{app:exploration-phase}
        
        \begin{proof}
                    Suppose, without loss of generality, that $\neymanPolicy < \frac{1}{2}$; in order to obtain results for $\neymanPolicy > \frac{1}{2}$, we can simply flip the roles of the treatment and control arms.
                    For the case that $\neymanPolicy = \frac{1}{2}$, then $\mainalgnameshort$ will always play $\policy_\round$.
                    
                    Since $\neymanPolicy < \frac{1}{2}$, bounding $\exploreTime$ is equivalent to determining the largest time $\round$ such that $\UCS[\round][\neymanPolicy] < \frac{1}{2}$, i.e we wish to compute
                    \begin{equation}\label{eq:explore-time-bound-1}
                        \min\cbrk{\round : \frac{
                                \stdev[\tidx] + 4.2\sqrt{
                                \frac{\ell(\round, \errorProb)}
                                {\acount{\round}{\tidx}}}
                            }
                            {
                                \stdev[\cidx]
                                + \stdev[\tidx] 
                                + 4.2 \sqrt{ \frac{\ell(\round, \errorProb)}
                                {\acount{\round}{\tidx}}}
                                - 4.2 \sqrt{ \frac{\ell(\round, \errorProb)}
                                {\acount{\round}{\cidx}}}
                                }
                            < \frac{1}{2}
                        }
                    \end{equation}

                    Using the fact that $\policy_\round = \frac{1}{2}$ for all $\round < \exploreTime$, can control
                    \begin{equation}
                        \acount{\round}{\aidx} \in \sbrk{\frac{\round}{2} \pm 1.7 \sqrt{\round \ell(\round, \errorProb)}}.
                    \end{equation}

                    Plugging this into equation~\eqref{eq:explore-time-bound-1} and rearranging shows that we need to bound
                    \begin{equation}
                        \min\cbrk{\round : \frac{\ell(\round, \delta)}{\round\paren{\frac{1}{2} - 1.7 \sqrt{\frac{\ell(\round, \errorProb)}{\round}}}} < \frac{\stdevGap^2}{18}}.
                    \end{equation}

                    Applying Lemma B.10 from \cite{neopane2024logarithmic} shows that whenever $\round \geq \bigTildeO[\log(\frac{1}{\errorProb})]$, we have that $1.7 \sqrt{\frac{\ell(\round, \errorProb)}{\round}} < \frac{1}{4}$ so that we need to bound
                    \begin{equation}
                        \min\cbrk{\round : \round > \frac{64}{\stdevGap^2} \ell(\round, \errorProb)}.
                    \end{equation}

                    Another application of Lemma B.10 shows that this quantity is bounded by 
                    \begin{equation}
                        \frac{64}{\stdevGap^2} \log\frac{5.2}{\errorProb} + \frac{64}{\stdevGap^2} \log \log \frac{64}{\stdevGap^2}
                    \end{equation}
                    which  gives us the desired result.

        \end{proof}
    
    \subsection{Proof of Lemma~\ref{lem:policy-difference-bound}}
        \begin{lemma}\label{lem:concentration-policy-diff}
            Let $\round \geq \exploreTime$. Then, with probability at least $1 - \errorProb$, we have that
            \begin{equation}
                \policy_{\round + 1} - \neymanPolicy \leq \frac{25}{\stdev[\cidx] + \stdev[\tidx]}\sqrt{\frac{\ell(\round, \errorProb)}{\neymanPolicy \round}}.
            \end{equation}
        \end{lemma}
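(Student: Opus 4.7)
My plan proceeds in three steps: identify which endpoint of $\CS[\round][\neymanPolicy]$ the algorithm plays, expand the gap $\policy[\round+1] - \neymanPolicy$ algebraically in terms of the standard-deviation confidence widths, and lower-bound each arm's sample count so as to control those widths.

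\textbf{Step 1: identifying $\policy[\round+1]$.} I assume $\neymanPolicy < \frac{1}{2}$, with the symmetric case handled by interchanging the two arms. Because $\round \geq \exploreTime$, the point $\frac{1}{2}$ no longer lies in $\CS[\round][\neymanPolicy]$; and on the good event $\neymanPolicy \in \CS[\round][\neymanPolicy]$, so the entire confidence set lies below $\frac{1}{2}$. Hence the point of the set closest to $\frac{1}{2}$ is its upper endpoint, giving $\policy[\round+1] = \UCS[\round][\neymanPolicy] = \UCS[\round][\stdev[\tidx]] / (\LCS[\round][\stdev[\cidx]] + \UCS[\round][\stdev[\tidx]])$.

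\textbf{Step 2: algebra.} Writing $\UCS[\round][\stdev[\tidx]] = \stdev[\tidx] + \varepsilon_\tidx$ and $\LCS[\round][\stdev[\cidx]] = \stdev[\cidx] - \varepsilon_\cidx$ with $\varepsilon_\aidx \geq 0$, I clear denominators in $\UCS[\round][\neymanPolicy] - \neymanPolicy$; the cross terms $\UCS[\round][\stdev[\tidx]]\stdev[\tidx]$ cancel and I am left with
\[
\policy[\round+1] - \neymanPolicy \;=\; \frac{\stdev[\cidx]\,\varepsilon_\tidx + \stdev[\tidx]\,\varepsilon_\cidx}{\paren{\LCS[\round][\stdev[\cidx]] + \UCS[\round][\stdev[\tidx]]}\paren{\stdev[\cidx] + \stdev[\tidx]}}.
\]
The first factor in the denominator equals $(\stdev[\cidx] + \stdev[\tidx]) + (\varepsilon_\tidx - \varepsilon_\cidx)$, which is at least a constant fraction of $\stdev[\cidx] + \stdev[\tidx]$ once the $\varepsilon_\aidx$ are sufficiently small, a condition guaranteed by $\round \geq \exploreTime$. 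Bounding the numerator by $(\max_\aidx \varepsilon_\aidx)(\stdev[\cidx] + \stdev[\tidx])$ reduces the problem to an upper bound on $\max_\aidx \varepsilon_\aidx / (\stdev[\cidx] + \stdev[\tidx])$.

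\textbf{Step 3: controlling $\varepsilon_\aidx$ and the obstacle.} Lemma~\ref{lem:stdev-concentration} gives $\varepsilon_\aidx \leq 4.2 \sqrt{\ell(\round, \errorProb)/\acount{\round}{\aidx}}$ on $\goodEvent_\sigma$, so all that remains is a uniform lower bound on $\acount{\round}{\aidx}$. Here optimism plays the central role — and this is the main (though mild) obstacle: I need to show that every played policy $\policy[s]$ for $s \leq \round$ lies in $[\neymanPolicy, \frac{1}{2}]$. During the exploration phase this holds by construction ($\policy[s] = \frac{1}{2}$), and afterward $\policy[s] = \UCS[s-1][\neymanPolicy] \in [\neymanPolicy, \frac{1}{2})$, where the lower bound comes from $\neymanPolicy \in \CS[s-1][\neymanPolicy]$ on the good event. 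This uniform-in-$s$ claim gives $\sum_{s \leq \round} \policy[s][\aidx] \geq \neymanPolicy \round$ for both arms, and then $\goodEvent_N$ upgrades this to $\acount{\round}{\aidx} \geq \neymanPolicy \round - \sqrt{\round\, \ell(\round, \errorProb)} \geq \neymanPolicy \round / 4$ for $\round \geq \exploreTime$, using the same absorption trick employed in the proof of Theorem~\ref{thm:neyman-regret-bound} (via Lemma 4.5 of \cite{neopane2024logarithmic}). Substituting back into Step 2 and carefully tracking constants yields the claimed bound with a leading factor of at most $25$.
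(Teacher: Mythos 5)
Your proposal is correct and follows essentially the same route as the paper's proof: identify $\policy[\round+1]$ as the upper endpoint of $\CS[\round][\neymanPolicy]$ once $\tfrac{1}{2}$ has left the confidence set, use optimism to sandwich every played allocation in $[\neymanPolicy, \tfrac{1}{2}]$ and hence lower-bound both arms' sample counts via $\goodEvent_N$, and absorb the residual width terms into the denominator for $\round \geq \exploreTime$. The only difference is cosmetic — you clear denominators to get the exact expression $\paren{\stdev[\cidx]\varepsilon_\tidx + \stdev[\tidx]\varepsilon_\cidx}/\paren{(\LCS[\round][\stdev[\cidx]]+\UCS[\round][\stdev[\tidx]])(\stdev[\cidx]+\stdev[\tidx])}$ where the paper splits the gap into two fractions — and both arguments share the same mild hand-wave that the needed absorption conditions hold for all $\round \geq \exploreTime$.
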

        \begin{proof}
        Wlog we assume $\neymanPolicy < \frac{1}{2}$ so that $\minNeymanPolicy = \neymanPolicy$.
        First note that \(\timeidx \geq \exploreTime\), we have that 
        \begin{align}
            \policy_{\round + 1} &\in \sbrk{
            \neymanPolicy, 
            \frac{\stdev[\tidx] + \confWidth_{\tidx, \round}}
                {\stdev[\cidx] + \stdev[\tidx] + \confWidth_{\tidx, \round} - \confWidth_{\cidx, \round}}
            } \\
            &= \sbrk{
            \neymanPolicy, 
            \neymanPolicy\frac{\stdev[\cidx] + \stdev[\tidx]}
                {\stdev[\cidx] + \stdev[\tidx] + \confWidth_{\tidx, \round} - \confWidth_{\cidx, \round}}
            + \frac{\confWidth_{\tidx, \round}}
                {\stdev[\cidx] + \stdev[\tidx] + \confWidth_{\tidx, \round} - \confWidth_{\cidx, \round}}
            }\label{eq:policy-cs-width} \\
            &\subset \sbrk{\neymanPolicy, \frac{1}{2}} \label{eq:bad-interval},
        \end{align}
        where we have defined 
        \begin{equation*}
            \confWidth[\round][\aidx] = 4.2 \sqrt{\frac{\ell(\round, \errorProb)}{\acount{\round}{\aidx}}},
        \end{equation*}
        and equation~\eqref{eq:bad-interval} follows from the definition of the $\exploreTime$.
        
        Since $\policy_\round \in \sbrk{\neymanPolicy,\frac{1}{2}}$, we know that $1 - \policy_\round \in \sbrk{\frac{1}{2}, 1 - \neymanPolicy}$ which we use to control the number of times each arm is played.
        \begin{align}
            \acount{\round}{\tidx} &\geq \neymanPolicy \cdot \round - \sqrt{\round \ell(\round, \errorProb)} \\
            \acount{\round}{\cidx} &\geq \frac{\round}{2} - \sqrt{\round \ell(\round, \errorProb)}.
        \end{align}
        
        Plugging these values into the upper bound in equation~\eqref{eq:policy-cs-width}, some algebra shows that
        \begin{align}
            \policy_{\round + 1} - \neymanPolicy 
                &= \neymanPolicy\frac{\stdev[\cidx] + \stdev[\tidx]}
                        {\stdev[\cidx] + \stdev[\tidx] + \confWidth_{\tidx, \round} - \confWidth_{\cidx, \round}}
                    + \frac{\confWidth_{\tidx, \round}}
                        {\stdev[\cidx] + \stdev[\tidx] + \confWidth_{\tidx, \round} - \confWidth_{\cidx, \round}}
                    - \neymanPolicy \\
                &= \neymanPolicy 
                    \cdot \frac{\confWidth[\cidx][\round] - \confWidth[\tidx][\round]}
                        {\stdev[\cidx] + \stdev[\tidx] + \confWidth_{\tidx, \round} - \confWidth_{\cidx, \round}}
                    + \frac{\confWidth_{\tidx, \round}}
                        {\stdev[\cidx] + \stdev[\tidx] + \confWidth_{\tidx, \round} - \confWidth_{\cidx, \round}} \\
                &\leq \frac{\confWidth_{\cidx, \round}}
                        {\stdev[\cidx] + \stdev[\tidx] + \confWidth_{\tidx, \round} - \confWidth_{\cidx, \round}}
                    + \frac{\confWidth_{\tidx, \round}}
                        {\stdev[\cidx] + \stdev[\tidx] + \confWidth_{\tidx, \round} - \confWidth_{\cidx, \round}} \\
                &\leq 8.4 \sqrt{\frac{\ell(\round, \errorProb)}{\neymanPolicy \round - \sqrt{\round \ell(\round, \errorProb)}} } 
                    \cdot \paren{\frac{1}{\stdev[\cidx] + \stdev[\tidx] - \confWidth_{\cidx, \round}}}.
        \end{align}

        Applying Lemma B.10 from \citet{neopane2024logarithmic}, we have that when $\round = \bigTildeO[\paren{\frac{1}{\neymanPolicy}}^2\log \frac{1}{\errorProb}]$, we have that $\neymanPolicy \round - \sqrt{\round \ell(\round, \errorProb)} \geq 
        \frac{1}{2}\neymanPolicy\round$.
        Next, since $\round \geq \exploreTime$, we have that
        \begin{align}
            \confWidth_{\cidx, \round} 
                &= 4.2 \sqrt{\frac{\ell(\round, \errorProb)}{\round}} \\
                &\leq \frac{\stdevGap}{8}.
        \end{align}

        Therefore,
        \begin{align}
            \stdev[\cidx] + \stdev[\tidx] + \confWidth_{\tidx, \round}
                &\geq \stdev[\cidx] + \stdev[\tidx] - \frac{\stdevGap}{8} \\
                &=  \stdev[\cidx] + \stdev[\tidx] - \frac{\stdev[\cidx] - \stdev[\tidx]}{8} \\
                &\geq \frac{\stdev[\cidx] + \stdev[\tidx]}{2}.
        \end{align}

        Combining these results, we have that
        \begin{equation}
            \policy[\round + 1] - \neymanPolicy \leq \frac{25}{\stdev[\cidx] + \stdev[\tidx]}\sqrt{\frac{\ell(\round, \errorProb)}{\neymanPolicy \round}},
        \end{equation}
        which proves the desired result.
        \end{proof}

\section{Concentration Results}

The proof of this lemma is based on a similar proof found in \cite{audibert2006use} and extends the results to hold in the sequential setting.

\begin{lemma}\label{lem:stdev-concentration}
        Let $(X_\round)$ be a $[0, 1]$-valued stochastic process defined on some filtration $(\cF_\round)$ satisfying $\mean = \E[\round - 1]{X_\round}$ and $\var = \V[\round - 1]{X_\round}$.
        Define
        \begin{align}
            \empmean[\round] &= \frac{1}{\round} \sum_{\round = 1}^{\round} X_\round \\
            \empvar[\round] &= \frac{1}{\round} \sum_{\timeidx = 1}^{\round} \paren{X_\round - \empmean[\round]}^2.
        \end{align}
        Then, with probability at least $1 - \errorProb$, for all $\round \geq 2$ we have that
        \begin{equation}
            \stdev \in \sbrk{
                \empstdev[\round] - 1.7 \sqrt{\frac{\ell(\round, \errorProb)}{\round}}, 
                \empstdev[\round] + 4.2 \sqrt{\frac{\ell(\round, \errorProb)}{\round}}
            }.
        \end{equation}
\end{lemma}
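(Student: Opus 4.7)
The plan is to adapt the classical Audibert-Munos-Szepesvari variance concentration argument to the sequential martingale setting, leveraging a time-uniform Bernstein inequality such as Theorem 1 of \citet{Howard2018TimeUniform}. The starting point is the deterministic identity
\begin{equation*}
\empvar[t] = \frac{1}{t}\sum_{i=1}^{t}(X_i - \mean)^2 - (\empmean[t] - \mean)^2,
\end{equation*}
which decomposes $\empvar[t] - \var$ into a centered sum plus a squared mean deviation. I introduce the theoretical (non-observable) process $Y_i = (X_i - \mean)^2$; since $X_i, \mean \in [0,1]$ we have $Y_i \in [0,1]$, $\E[i-1]{Y_i} = \var$, and $\E[i-1]{Y_i^2} \leq \E[i-1]{Y_i} = \var$. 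Thus both $\{X_i - \mean\}$ and $\{Y_i - \var\}$ are bounded martingale difference sequences with conditional variance at most $\var$. A time-uniform Bernstein inequality applied to each sequence, combined via a union bound, gives simultaneously for all $t \geq 2$ with probability at least $1 - \errorProb$,
\begin{equation*}
\abs{\empmean[t] - \mean},\ \abs{\tfrac{1}{t}\textstyle\sum_{i=1}^t Y_i - \var} \;\leq\; \bigO[\stdev\sqrt{\ell(t,\errorProb)/t} + \ell(t,\errorProb)/t].
\end{equation*}
Substituting these bounds into the identity and bounding $(\empmean[t] - \mean)^2$ by the square of its own bound yields $\abs{\empvar[t] - \var} \leq \bigO[\stdev\sqrt{\ell/t} + \ell/t]$.

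The next step converts this variance-level deviation to a standard-deviation deviation through
\begin{equation*}
\abs{\empstdev[t] - \stdev} = \frac{\abs{\empvar[t] - \var}}{\empstdev[t] + \stdev},
\end{equation*}
followed by a case split. When $\stdev \geq \sqrt{\ell/t}$, dividing by $\stdev$ and using $\ell/(t\stdev) \leq \sqrt{\ell/t}$ (the case hypothesis) gives $\abs{\empstdev[t] - \stdev} = \bigO[\sqrt{\ell/t}]$. When $\stdev < \sqrt{\ell/t}$, the bound on $\abs{\empvar[t] - \var}$ collapses to $\bigO[\ell/t]$, so $\empvar[t] = \bigO[\ell/t]$ and $\empstdev[t] = \bigO[\sqrt{\ell/t}]$; the trivial triangle inequality $\abs{\empstdev[t] - \stdev} \leq \empstdev[t] + \stdev$ then closes the argument. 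Combining both cases gives the claimed $\sqrt{\ell/t}$ rate.

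The main obstacle is twofold. First, $Y_i$ depends on the unknown $\mean$ and is therefore not observable, but the concentration of $\tfrac{1}{t}\sum_i Y_i$ about $\var$ is a purely probabilistic statement that, combined with the deterministic identity, produces a bound on the observable $\empvar[t]$. Second and more substantively, the $t^{-1/2}$ rate hinges on the variance-sensitive factor $\stdev$ multiplying $\sqrt{\ell/t}$, which comes from $\V[i-1]{Y_i} \leq \var$---without it, the weaker inequality $\abs{\empstdev[t] - \stdev}^2 \leq \abs{\empvar[t] - \var}$ would yield only a $t^{-1/4}$ rate. Pinning down the precise constants $1.7$ and $4.2$ is then routine Bernstein bookkeeping, with the asymmetry reflecting the one-sided constraint $\empstdev[t] \geq 0$, which provides slack on the lower tail but not the upper.
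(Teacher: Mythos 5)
Your proposal is correct and follows essentially the same route as the paper: the identical decomposition $\empvar[\round] - \var = \frac{1}{\round}\sum_{i} Y_i - (\empmean[\round]-\mean)^2$ with $Y_i = (X_i-\mean)^2 - \var$, the same time-uniform Bernstein inequality of Howard et al.\ exploiting $\V[i-1]{Y_i}\le\var$ to get the variance-sensitive $\stdev\sqrt{\ell(\round,\errorProb)/\round}$ term, and a final conversion to the standard-deviation scale. The only cosmetic divergence is in that last step: the paper completes the square in the resulting quadratic inequality in $\stdev$ to read off the explicit constants $1.7$ and $4.2$ directly, whereas you divide by $\empstdev[\round]+\stdev$ and split cases on whether $\stdev \geq \sqrt{\ell(\round,\errorProb)/\round}$ --- both yield the same rate.
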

\begin{proof}
    
    Define $Y_\round = (X_\round - \mu)^2 - \sigma^2$, and $S_\round = \sum_{i = 1}^{\round} Y_\round$.
    Letting \(\cV = \V[\round - 1]{Y_\round}\), we apply Theorem 1 from \cite{Howard2018TimeUniform} which gives us the following time-uniform Bernstein inequality (see Table 3 in the Appendix).
    Applying a union bound, we have with probability at least \(1 - \errorProb\), for all $\round \in \bbN$, that
    \begin{align}
        \abs{\mean_\round - \mean}
            &\leq 1.7 \stdev \sqrt{\frac{\ell\paren{\round, \frac{\errorProb}{4}}}{\round}} + 1.7 \frac{\ell\paren{\round, \frac{\errorProb}{4}}}{\round}\label{eq:stdev-concentration-mean-concentration}, \\
        \abs{Y_\round} 
            &\leq 1.7 \sqrt{\frac{\cV\ell\paren{\round, \frac{\errorProb}{4}}}{\round}} + 1.7 \frac{\ell\paren{\round, \frac{\errorProb}{4}}}{4\round} \\
            &\leq 1.7 \stdev \sqrt{\frac{\ell\paren{\round, \frac{\errorProb}{4}}}{\round}} + 1.7 \frac{\ell\paren{\round, \frac{\errorProb}{4}}}{\round}\label{eq:stdev-concentration-var-concentration} ,
    \end{align}
    where we set $\ell(\round, \errorProb) = \log\log 2\round + 0.72 \log \frac{5.2}{\errorProb}$ and the last inequality follows from the fact that $\cV < \sigma^2$.
    Letting $\mu_\round = \frac{1}{\round} \sum_{\timeIdx = 1}^{\round} X_\timeIdx$ some algebra demonstrates that
    \begin{align*}
        S_\round 
            &= \sum_{i = 1}^{\round} (X_i - \mu)^2 - \sigma^2 \\
            &= \sum_{i = 1}^{\round} \left[\left( (X_i - \mu_\round) - (\mu_\round - \mu)\right)^2 - \sigma^2\right] \\
            &= \sum_{i = 1}^{\round} \left[(X_i - \mu_\round)^2 + 2 (X_i - \mu_\round) (\mu_\round - \mu) + (\mu_\round - \mu)^2 - \sigma^2\right] \\
            &=  \round \sigma_\round^2 + 2(\mu_\round - \mu)\sum_{i = 1}^{\round}(X_i - \mu_\round) + \round (\mu_\round - \mu)^2 - \round \sigma^2 \\
            &= \round \sigma_\round^2 + 0 + \round(\mu_\round - \mu)^2 - \round \sigma^2 \\
            &= \round (\sigma_\round^2 - \sigma^2 + (\mu_\round - \mu)^2),
    \end{align*}
    which implies
    \begin{equation}
        \paren{\sigma_\round^2 - \sigma^2} = \frac{1}{\round}\sum_{\timeIdx = 1}^{\round} Y_\timeidx - \left( \mu_\round - \mu \right)^{2} \leq \frac{1}{\round}\sum_{\timeIdx = 1}^{\round} Y_\timeidx.
    \end{equation}
    
    Letting \(L = \frac{\ell(\round, \delta)}{\round}\), and applying the bounds in equations~\eqref{eq:stdev-concentration-mean-concentration}~and~\ref{eq:stdev-concentration-var-concentration}, some algebra shows that
    \begin{equation}
        \sigma^2 + 1.7 \sigma \sqrt{L} + 1.7 L - \sigma^{2}_\round \geq 0.
    \end{equation}
    Completing the square and rearranging shows that
    \begin{align}
        \sigma 
            &\geq \sqrt{\sigma^{2}_{\round} + \paren{1.7^2 - 1.7}L} - 1.7 \sqrt{L} \\
            &\geq \sigma_{\round} - 1.7 \sqrt{L}.
    \end{align}
    
   Repeating the same argument with $-Y_{\round}$ shows that
    \begin{equation}
        \sigma \leq \sigma_\round + 4.2 \sqrt{L}.
    \end{equation}

    Combining these bounds we have with probability at least $1 - \delta$, for all $\round > 2$
    \begin{equation}
        \stdev \in \sbrk{\sigma_\round - 1.7 \sqrt{\frac{\ell(\round, \errorProb)}{\round}}, \sigma_\round + 4.2 \sqrt{\frac{\ell(\round, \errorProb)}{\round}}}.
    \end{equation}
    
\end{proof}

\section{Misc. Results}

\begin{lemma}\label{lem:aaipw-variance}
    For any $\alg$, we have that
    \begin{align}
        \V[\alg, \bandit]{\estAIPW[\numRounds]} 
            & = \frac{1}{\numRounds^2} \sum_{\round = 1}^{\numRounds} \V[\alg, \bandit]{\AIPW_\round} \\
            &=\frac{1}{\numRounds^2} \sum_{\round = 1}^{\numRounds}
                \E[\alg, \bandit]{\sum_{\aidx} \frac{\var[\aidx]}{\policy_\round(\aidx)} 
                + \paren{\frac{1 - \policy_\round(\aidx)}{\policy_\round(\aidx)}} \estRewardError^2_{\round - 1}(\aidx)
                }
    \end{align}
\end{lemma}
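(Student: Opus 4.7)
The plan is to exploit the martingale-difference structure of the per-round AIPW summands. Writing $\estAIPW[\numRounds] = \numRounds^{-1} \sum_{\round = 1}^{\numRounds} \aipwEstimate[\round]$, the first step is to verify that $\E[\alg, \bandit]{\aipwEstimate[\round]}[\filtration[\round - 1]] = \ATE$. This follows from the $\filtration[\round - 1]$-measurability of $\policy_\round$ and $\predReward[\round]$ together with unconfoundedness: conditioning on $\action_\round = \aidx$ collapses the importance-weighted residual term to $\trueReward[\aidx] - \predReward[\round][\aidx]$ for each arm, and the augmentation $\predReward[\round][\tidx] - \predReward[\round][\cidx]$ cancels those residuals to leave $\ATE$. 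The first equality in the lemma then follows from the standard martingale-orthogonality argument: for $\round < \round'$, the cross term $\E[\alg, \bandit]{(\aipwEstimate[\round] - \ATE)(\aipwEstimate[\round'] - \ATE)}$ vanishes by conditioning on $\filtration[\round' - 1]$ and pulling the $\filtration[\round' - 1]$-measurable factor $\aipwEstimate[\round] - \ATE$ through the inner expectation, leaving only the diagonal terms in the double sum $\numRounds^{-2}\sum_{\round, \round'}\E[\alg, \bandit]{(\aipwEstimate[\round] - \ATE)(\aipwEstimate[\round'] - \ATE)}$.

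Second, I would compute each $\V[\alg, \bandit]{\aipwEstimate[\round]}$ via the law of total variance. Since the conditional mean is the deterministic constant $\ATE$, we have $\V[\alg, \bandit]{\aipwEstimate[\round]} = \E[\alg, \bandit]{\V[\alg, \bandit]{\aipwEstimate[\round]}[\filtration[\round - 1]]}$, and the $\filtration[\round - 1]$-measurable augmentation $\predReward[\round][\tidx] - \predReward[\round][\cidx]$ drops out of the conditional variance. What remains is to evaluate the conditional variance of $Z_\round = \frac{\I{\action_\round = \tidx} - \I{\action_\round = \cidx}}{\policy[\round][\action_\round]}(\reward_\round - \predReward[\round][\action_\round])$. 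Conditioning on $\action_\round = \aidx$ turns the squared importance weight $\policy[\round][\aidx]^{-2}$ times the Bernoulli mass $\policy[\round][\aidx]$ into $\policy[\round][\aidx]^{-1}$, and the mean-variance decomposition of $\reward_\round(\aidx)$ yields $\E[\alg, \bandit]{Z_\round^2}[\filtration[\round - 1]] = \sum_\aidx \policy[\round][\aidx]^{-1} \paren{\var[\aidx] + \predRewardError[\round - 1][\aidx][2]}$.

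Subtracting the squared conditional mean $(\predRewardError[\round - 1][\tidx] - \predRewardError[\round - 1][\cidx])^2$ then regroups the error pieces into the advertised $\frac{1 - \policy[\round][\aidx]}{\policy[\round][\aidx]}\predRewardError[\round - 1][\aidx][2]$ form, since $\policy[\round][\aidx]^{-1} - 1 = \policy[\round][\aidx]^{-1}(1 - \policy[\round][\aidx])$. The main bookkeeping obstacle lies exactly in this rearrangement: expanding the squared conditional mean also produces a second-order cross term $2\predRewardError[\round - 1][\tidx]\predRewardError[\round - 1][\cidx]$ which must either be absorbed into the stated diagonal pieces via an AM-GM bound $2|\predRewardError[\round - 1][\tidx]\predRewardError[\round - 1][\cidx]| \leq \predRewardError[\round - 1][\tidx][2] + \predRewardError[\round - 1][\cidx][2]$, or treated as a lower-order contribution under the convention used in the paper's Neyman-loss analysis (where it vanishes once $\predReward[\round]$ concentrates around $\trueReward$). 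Once this cross-term accounting is settled, taking outer expectations and summing over $\round$ delivers the claimed identity.
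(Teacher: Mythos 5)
You follow the paper's route essentially step for step---martingale orthogonality of the centered per-round summands for the first equality, then the law of total variance plus a direct conditional second-moment computation for the second---but the one point you explicitly defer, namely the cross term $2\,\estRewardError[\round - 1][\tidx]\cdot\estRewardError[\round - 1][\cidx]$ produced by expanding the squared conditional mean, is a genuine gap, and neither of your proposed resolutions closes it. The lemma asserts an exact identity. Absorbing the cross term by AM--GM yields only an upper bound (and a lossy one, since it would double the $\predRewardError[\round - 1][\aidx][2]$ contribution), and declaring it ``lower order'' establishes nothing about equality. The paper's actual resolution is different in kind: it carries the cross term through the conditional-variance calculation unchanged and discards it only at the very last step, \emph{after} the outer expectation has been taken, invoking the claim that the two arms' estimation errors are uncorrelated so that $\E[\alg, \bandit]{\estRewardError[\round - 1][\tidx]\cdot\estRewardError[\round - 1][\cidx]} = 0$. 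That is, the cross term does not vanish conditionally on $\filtration[\round - 1]$ (it is a nonzero $\filtration[\round - 1]$-measurable random variable); it vanishes in expectation because $\predReward[\round][\tidx]$ and $\predReward[\round][\cidx]$ are built from disjoint reward streams.

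To complete your argument you therefore need to supply this zero-correlation step, not a bound. It is fair to note that the paper's own justification here is terse---making $\E[\alg, \bandit]{\estRewardError[\round - 1][\tidx]\cdot\estRewardError[\round - 1][\cidx]} = 0$ rigorous under adaptive sampling requires some care, since sample means over adaptively collected data are not automatically unbiased---but that is precisely the step on which the stated equality hinges, and it is the one your writeup leaves open.
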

\begin{proof}
    Leting $z_\round = \AIPW_\round - \ATE$, we have 
    \begin{align}
        \V[\alg, \bandit]{\estAIPW[\numRounds]} 
            &= \frac{1}{T^2} \E{\paren{\sum_{\round = 1}^{\numRounds} z_\round}^2} \\
            &= \frac{1}{\numRounds^2} \paren{
                \sum_{\round = 1}^{\numRounds} \E{z_\round^2} 
                +  \sum_{\round = 1}^{\numRounds }\sum_{\timeidx = 1}^{\round = 1} \E[]{z_\round \cdot z_{\timeidx}}} \\
            &= \frac{1}{\numRounds^2} \sum_{\round = 1}^{\numRounds} \E{z_\round^2} \\
            &= \frac{1}{\numRounds^2} \sum_{\round = 1}^{\numRounds} \V{\AIPW_\round}. 
    \end{align}

    The applying the law of total variance shows that \(\V[\alg, \bandit]{\AIPW_\round} = \E[\alg, \bandit]{\V{\AIPW_\round}[\filtration_{\round - 1}]}\) since \(\V{\E{\AIPW_\round}[\filtration_{\round - 1}]} = 0\).
    Computing the conditional variance, we obtain
    \begin{align}
    \V[\alg, \bandit]{\AIPW_\round}[\filtration_{\round - 1}]
        &= \E[\alg, \bandit]{\paren{\AIPW_\round - \ATE}^2}[\filtration_{\round - 1}] \\
        &= \E[\alg, \bandit]{\paren{
            \iw_\round \paren{\rewardDeviation_{\round} + \estRewardError[\round - 1]} 
            + \estRewardATE[\round - 1] - \ATE}^{2}
            }[\filtration_{\round - 1}] \\
        &= \E[\policy_\round]{
                \iw^2_\round \paren{\var + \estRewardError[\round - 1]^2} 
                - \paren{\ATE - \estRewardATE[\round - 1]}^{2}
            } \\
        &= \sum_{\aidx} 
            \frac{\paren{\var[\aidx] + \estRewardError[\round - 1]^{2}(\aidx)} }
                {\policy_\round(\aidx)}
            - \paren{\estRewardError[\round - 1][\tidx] - \estRewardError[\round - 1][\cidx]}^2 \\
        &= \sum_{\aidx} \sbrk
            {
                \frac{\var[\aidx]}{\policy_\round(\aidx)} 
                + \paren{\frac{1}{\policy_\round(\aidx)} - 1} \cdot \estRewardError[\round - 1]^{2}(\aidx)
            }
            + 2 \estRewardError[\round - 1](\tidx)\cdot\estRewardError[\round - 1](\cidx) \\
        &= \sum_{\aidx} \sbrk
            {
                \frac{\var[\aidx]}{\policy_\round(\aidx)} 
                + \paren{\frac{1 - \policy_\round(\aidx)}{\policy_\round(\aidx)} } \cdot \estRewardError[\round - 1]^{2}(\aidx)
            }
            + 2 \estRewardError[\round - 1](\tidx)\cdot\estRewardError[\round - 1](\cidx).
\end{align}
Therefore, we have
\begin{align}
    \V[\alg, \bandit]{\estAIPW[\numRounds]} 
        &= \frac{1}{\numRounds^2} \sum_{\round = 1}^{\numRounds} \E{\sum_{\aidx} \paren
            {
                \frac{\var[\aidx]}{\policy_\round(\aidx)} 
                + \paren{\frac{1 - \policy_\round(\aidx)}{\policy_\round(\aidx)} } \cdot \estRewardError[\round - 1]^{2}(\aidx)
            }
            + 2\estRewardError[\round - 1](\tidx)\cdot\estRewardError[\round - 1](\cidx)} \\
         &= \E[\alg, \bandit]{\sum_{\aidx}
                \frac{\var[\aidx]}{\policy_\round(\aidx)} 
                + \paren{\frac{1 - \policy_\round(\aidx)}{\policy_\round(\aidx)} } \cdot \estRewardError[\round - 1]^{2}(\aidx)
            },
\end{align}
where the second inequality follows from the fact that $\estRewardError[\round]^{2}(\aidx)$ are uncorrelated.
\end{proof}





\end{document}